\newtheorem{lemma}{Lemma}
\newtheorem{corollary}{Corollary}
\newtheorem{proposition}{Proposition}
\newtheorem{theorem}{Theorem}
\newtheorem{definition}{Definition}
\newtheorem{example}{Example}
\newtheorem{remark}{Remark}
\newcommand{\Var}{\mathrm{Var}}
\DeclareMathOperator*\argmin{arg\,min}
\DeclareMathOperator*\argmax{arg\,max}
\newcommand{\sE}{\mathsf{E}}
\newcommand{\bbE}{\mathbb{E}}
\newcommand{\cF}{\mathcal{F}}
\newcommand{\cM}{\mathcal{M}}
\newcommand{\cV}{\mathcal{V}}
\newcommand{\y}{\widetilde{y}}
\newcommand{\M}{\mathcal{M}}
\title{Segmenting Watermarked Texts From Language Models}
\author{%
  Xingchi Li\thanks{Equal contribution}\\
  Department of Statistics\\
  Texas A\&M University\\
  College Station, TX 77843 \\
  \texttt{anthony.li@stat.tamu.edu} \\
  \And
  Guanxun Li$^*$\\
  Department of Statistics\\
  Beijing Normal University at Zhuhai\\
  Zhuhai, Guangdong 519087 \\
  \texttt{guanxun@bnu.edu.cn} \\
  \AND
  Xianyang Zhang\thanks{Corresponding author}\\
  Department of Statistics\\
  Texas A\&M University\\
  College Station, TX 77843 \\
  \texttt{zhangxiany@stat.tamu.edu} \\
}
\begin{document}

\maketitle

\setcounter{footnote}{0}

\begin{abstract}
  Watermarking is a technique that involves embedding nearly unnoticeable statistical signals within generated content to help trace its source. This work focuses on a scenario where an untrusted third-party user sends prompts to a trusted language model (LLM) provider, who then generates a text from their LLM with a watermark. This setup makes it possible for a detector to later identify the source of the text if the user publishes it. The user can modify the generated text by substitutions, insertions, or deletions. Our objective is to develop a statistical method to detect if a published text is LLM-generated from the perspective of a detector. We further propose a methodology to segment the published text into watermarked and non-watermarked sub-strings. The proposed approach is built upon randomization tests and change point detection techniques. We demonstrate that our method ensures Type I and Type II error control and can accurately identify watermarked sub-strings by finding the corresponding change point locations. To validate our technique, we apply it to texts generated by several language models with prompts extracted from Google's C4 dataset and obtain encouraging numerical results.\footnote{We release all code publicly at \url{https://github.com/doccstat/llm-watermark-cpd}.}
\end{abstract}

\section{Introduction}

With the increasing use of large language models in recent years, it has become essential to differentiate between text generated by these models and text written by humans. Some of the most advanced LLMs, such as GPT-4, Llama 3, and Gemini, are very good at producing human-like texts, which could be challenging to distinguish from human-generated texts, even for humans. However, it is crucial to distinguish between human-produced texts and machine-produced texts to prevent the spread of misleading information, improper use of LLM-based tools in education, model extraction attacks through distillation, and the contamination of training datasets for future language models.

Watermarking is a principled method for embedding nearly unnoticeable statistical signals into text generated by LLMs, enabling provable detection of LLM-generated content from its human-written counterpart. This work focuses on a scenario where an untrusted third-party user sends prompts to a trusted large language model (LLM) provider, who then generates a text from their LLM with a watermark. This makes it possible for a detector to later identify the source of the text if the user publishes it. The user is allowed to modify the generated text by making substitutions, insertions, or deletions before publishing it.
We aim to develop a statistical method to detect if a published text is LLM-generated from the perspective of a detector. When the text is declared to be watermarked, we study a problem that has been relatively less investigated in the literature: to separate the published text into watermarked and non-watermarked sub-strings. In particular, we divide the texts into a sequence of moving sub-strings with a pre-determined length and sequentially test if each sub-string is watermarked based on the $p$-values obtained from a randomization test.
Given the $p$-value sequence, we
examine if there are structural breaks in the underlying distributions. For non-watermarked sub-strings, the $p$-values are expected to be uniformly distributed over $[0,1]$, while for watermarked sub-strings, the corresponding $p$-values are more likely to concentrate around zero. By identifying the change points in the distributions of the $p$-values, we can segment the texts into watermarked and non-watermarked sub-strings.

In theory, we demonstrate that our method ensures Type I and Type II error control and can accurately identify watermarked sub-strings by finding the corresponding change point locations. Specifically, we obtain the convergence rate for the estimated change point locations.
To validate our technique, we apply it to texts generated by different language models with prompts extracted from the real news-like data set from Google's C4 dataset, followed by different modifications/attacks that can introduce change points in the text.

\subsection{Related works and contributions}
Digital watermarking is a field that focuses on embedding a signal in a medium, such as an image or text, and detecting it using an algorithm. Early watermarking schemes for natural language processing-generated text were presented by \cite{atallah2001natural}. \cite{hopper2007weak} formally defined watermarking and its desired properties, but their definitions were not specifically tailored to LLMs. More recently, \cite{abdelnabi2021adversarial} and \cite{munyer2023deeptextmark} proposed schemes that use machine learning models in the watermarking algorithm. These schemes take a passage of text and use a model to produce a semantically similar altered passage. However, there is no formal guarantee for generating watermarked text with desirable properties when using machine learning.

The current work is more related to \cite{Aaronson2023,pmlr-v202-kirchenbauer23a,kuditipudi2023robust}. In particular, \cite{pmlr-v202-kirchenbauer23a} introduced a ``red-green list" watermarking technique that splits the vocabulary into two lists based on hash values of previous tokens. This technique slightly increases the probability of embedding the watermark into ``green" tokens. Other papers that discuss this type of watermarking include \cite{kirchenbauer2023reliability}, \cite{cai2024towards}, \cite{liu2024adaptive}, and \cite{zhao2023protecting}. However, this watermarking technique is biased, as the next token prediction (NTP) distributions have been modified, leading to a performance degradation of the LLM. \cite{Aaronson2023} describes a technique for watermarking LLMs using exponential minimum sampling to sample tokens from an LLM, where the inputs to the sampling mechanism are also a hash of the previous tokens. This approach is closely related to the so-called Gumbel trick in machine learning.
\cite{kuditipudi2023robust} proposed an inverse transform watermarking method that can be made robust against potential random edits. Other unbiased watermarks in this fast-growing line of research include \cite{zhao2024permute,fernandez2023three,hu2023unbiased,wu2023dipmark}.

However, less attention has been paid to understanding the statistical properties of watermark generation and detection schemes. The paper by \cite{huang2023towards} considers watermark detection as a problem of composite dependence testing. The authors aim to understand the minimax Type II error and the most powerful test that achieves it. However, \cite{huang2023towards} assumes that the NTP distributions remain unchanged from predicting the first token to the last token, which can be unrealistic. On the other hand, \cite{li2024statistical} introduced a flexible framework for determining the statistical efficiency of watermarks and designing powerful detection rules. This framework reduces the problem of determining the optimal detection rule to solving a minimax optimization program.

Compared to the existing literature, we make two contributions:
\\(a) We rigorously study the Type I and Type II errors of the randomization test to test the existence of a watermark; see Theorems \ref{thm1}-\ref{thm-max}.
We apply these results to the recently proposed inverse transform watermark and Gumbel watermark schemes; see Corollary \ref{cor1}.
\\(b) We develop a systematic statistical method to segment texts into watermarked and non-watermarked sub-strings. We have also investigated the theoretical and finite sample performance of this methodology. As far as we know, this problem has not been studied in recent literature on generating and detecting watermarked texts from LLMs.

\section{Problem setup}
\subsection{Watermarked text generation}
Denote by $\mathcal{V}$ the vocabulary (a discrete set), and let $P$ be an autoregressive LLM which maps a string $y_{-n_0:t-1}=y_{-n_0}y_{-n_0+1}\cdots y_{t-1}\in \mathcal{V}^{t+n_0}$ to a distribution over the vocabulary, with $p(\cdot|y_{-n_0:t-1})$ being the distribution of the next token $y_t$. Here $y_{-n_0:0}$ denotes the prompt provided by the user.
Set $V=|\mathcal{V}|$ be the vocabulary size, and
let $\xi_{1:t}=\xi_1\xi_2\cdots\xi_t$ be a watermark key sequence with $\xi_i\in \Xi$ for each $i$, where $\Xi$ is a general space. Given a prompt sent from a third-party user, the LLM provider calls a generator to autoregressitvely generate text from an LLM using a decoder function $\Gamma$, which maps $\xi_t$ and a distribution $p_t$ over the next token to a value in $\mathcal{V}$. The watermarking scheme should preserve the original text distribution, i.e.,
$P(\Gamma(\xi_t, p_t) = y ) = p_t(y)$.
A watermark text generation algorithm recursively generates a string $y_{1:n}$
by
\begin{align*}
y_i = \Gamma(\xi_i, p(\cdot|y_{-n_0:i-1})),\quad 1\leq i\leq n,
\end{align*}
where $n$ is the number of tokens in the text $y_{1:n}$ generated by the LLM,
and $\xi_i$'s are assumed to be independently generated from some distribution $\nu$ over $\Xi$. In other words, given $p(\cdot|y_{-n_0:i-1})$, $y_i$ is completely determined by $\xi_i$ and $y_{-n_0:i-1}$.

For ease of presentation, we associate each token in the vocabulary with a unique index from $[V]:=\{1,2,\dots,V\}$, and we remark that the generator should be invariant to this assignment.

\begin{example}[Inverse transform sampling]\label{ex2}
{\rm
In this example, we describe a watermarked text generation method developed in \cite{kuditipudi2023robust} and discuss an alternative formulation of their approach. Write $\mu_i(k)=p(k|y_{-n_0:i-1})$ for $1\leq k\leq V$ and $1\leq i\leq n$. To generate the $i$th token, we consider a permutation $\pi_i$ on $[V]$ and $u_i\sim \text{Unif}[0,1]$ (which denotes the uniform distribution over $[0,1]$), which jointly act as the key $\xi_i$. Let
\begin{align*}
\Gamma((\pi_i,u_i),\mu_i)=
\pi^{-1}_i(\min\{\pi_i(l):\mu_i(j:\pi_i(j)\leq \pi_i(l))\geq u_i\}).
\end{align*}
We note that $\Gamma((\pi_i,u_i),\mu_i)=k$ if $\min\{\pi_i(l):\mu_i(j:\pi_i(j)\leq \pi_i(l))\geq u_i\}=\pi_i(k)$, which implies that
$$\mu_i(j:\pi_i(j)\leq \pi_i(k))\geq u_i>\mu_i(j:\pi_i(j)<\pi_i(k)).$$
As the length of this interval is $\mu_i(k)$, $P(\Gamma((\pi_i,u_i),\mu_i)=k)=\mu_i(k)$.
An alternate way to describe the same generator is as follows: Given a partition of the interval $[0,1]$ into $V$ sub-intervals denoted by $\mathcal{I}_1,\dots,\mathcal{I}_V$, we can order them in such a way that each interval $\mathcal{I}_i$ is adjacent to its immediate right neighbor $\mathcal{I}_{i+1}$. Now let $\mathcal{I}(k;i)$'s be $V$ intervals with the length $|\mathcal{I}(k;i)|=p(k|y_{-n_0:i-1})$ for $1\leq k\leq V$. Given the permutation $\pi_i$, we can order the $V$ intervals through $\pi_i$, i.e., $\{\mathcal{I}_{\pi_i(k)}(k;i): k=1,2,\dots,V\}$. Define
$\xi_{ik}=\mathbf{I}\{u_i\in \mathcal{I}_{\pi_i(k)}(k;i)\}$
and set $y_i=k$ if $\xi_{ik}=1$. Clearly $P(y_i=k)=P(\xi_{ik}=1)=P(u_i\in \mathcal{I}_{\pi_i(k)}(k;i))=p(k|y_{-n_0:i-1})$.

Now, for a string $\y_{1:n}$ (with the same length as the key) that is possibly watermarked, \cite{kuditipudi2023robust} suggest the following metric to quantify the dependence between the watermark key and the string:
\begin{align}\label{eq-stat1}
\mathcal{M}(\xi_{1:n},\y_{1:n})=\frac{1}{n}\sum_{i=1}^n(u_i - 1 / 2)\left(\frac{\pi_i(\y_i) - 1}{V - 1} - \frac{1}{2}\right).
\end{align}
Observe that if $\y_i$ is generated using the above scheme with the key $\xi_i=(\pi_i,u_i)$, then $u_i$ and $\pi_i(\y_i)$ are positively correlated. Thus, a large value of $\mathcal{M}$ indicates that $\y_{1:n}$ is potentially watermarked.
}
\end{example}

\begin{example}[Exponential minimum sampling]\label{ex3}
{\rm
We describe another watermarking technique proposed in \cite{Aaronson2023}.
To generate each token of a text, we first sample $\xi_{ik}\sim \text{Unif}[0,1]$ independently for $1\leq k\leq V$. Let
$$y_i=\argmax_{1\leq k\leq V}\frac{\log(\xi_{ik})}{p(k|y_{-n_0:i-1})}=\argmin_{1\leq k\leq V}\frac{-\log(\xi_{ik})}{p(k|y_{-n_0:i-1})}=\argmin_{1\leq k\leq V}E_{ik},$$
where $E_{ik}:=-\log(\xi_{ik})/p(k|y_{-n_0:i-1}) \sim \text{Exp}(p(k|y_{-n_0:i-1}))$ with $\text{Exp}(a)$ denoting an exponential random variable with the rate $a$. For two exponential random variables $X\sim \text{Exp}(a)$ and $Y\sim \text{Exp}(b)$, we have two basic properties: (i) $\min(X,Y)\sim \text{Exp}(a+b)$; (ii) $P(X<Y)=\mathbb{E}[1-\exp(-a Y)]=a/(a+b)$. Using (i) and (ii), it is straightforward to verify that
\begin{align*}
P(y_i=k)=P\left(E_{ik}<\min_{j\neq k}E_{ij}\right)=p(k|y_{-n_0:i-1}).
\end{align*}
Hence, this generation scheme preserves the original text distribution.

\cite{Aaronson2023} proposed to measure the dependence between a string $\y_{1:n}$ and the key sequence $\xi_{1:n}$ using the metric
\begin{align}\label{eq-stat2}
\mathcal{M}(\xi_{1:n},\y_{1:n})=\frac{1}{n}\sum_{i=1}^n\left\{\log(\xi_{i, \y_i}) + 1\right\}.
\end{align}
The idea behind the definition of this function is that if $\y_i$ was generated using the key $\xi_i$, then $\xi_{i,\y_i}$ tends to have a higher value than the other components of $\xi_i$. Therefore, a larger value of the metric indicates that the string $\y_{1:n}$ is more likely to be watermarked.
}
\end{example}

\subsection{Watermarked text detection}\label{sec:detect}
We now consider the detection problem, which involves determining whether a given text is watermarked or not. Consider the case where a string $\y_{1:m}$ is published by the third-party user and a key sequence $\xi_{1:n}$ is provided to a detector. The detector calls a detection method to test
$$H_{0}: \text{$\y_{1:m}$ is not watermarked}\quad \text{versus}\quad H_{a}: \text{$\y_{1:m}$ is watermarked},$$
by computing a $p$-value with respect to a test statistic $\phi(\xi_{1:n},\y_{1:m})$.
It is important to note that the text published by the user can be quite different from the text initially generated by the LLM using the key $\xi_{1:n}$, which we refer to as $y_{1:n}$. To account for this difference, we can use a transformation function $\mathcal{E}$ that takes $y_{1:n}$ as the input and produces the published text $\y_{1:m}$ as the output, i.e.,
$
\y_{1:m}=\mathcal{E}(y_{1:n})$.
This transformation can involve substitutions, insertions, deletions, paraphrases, or other edits to the input text.

The test statistic $\phi$ measures the dependence between the text $\y_{1:m}$ and the key sequence $\xi_{1:n}$. Throughout our discussions, we will assume that a large value of $\phi$ provides evidence against the null hypothesis (e.g., stronger dependence between $\y_{1:m}$ and $\xi_{1:n}$). To obtain the $p$-value, we consider a randomization test. In particular, we generate $\xi_{i}^{(t)}\sim \nu$ independently over $1\leq i\leq n$ and $1\leq t\leq T$, and $\xi_{i}^{(t)}$s are independent with $\y_{1:m}$. Then the randomization-based $p$-value is given by
\begin{align*}
p_T = \frac{1}{T+1}\left(1+\sum^{T}_{t=1}\mathbf{1}\{\phi(\xi_{1:n},\y_{1:m})\leq \phi(\xi_{1:n}^{(t)},\y_{1:m})\}\right).
\end{align*}

\begin{theorem}\label{thm1}
{\rm For the randomization test, we have the following results.
\begin{itemize}
    \item[(i)] Under the null, $P(p_T\leq \alpha)=\lfloor (T+1)\alpha \rfloor/(T+1) \leq \alpha$, where $\lfloor a \rfloor$ denotes the greatest integer that is less than or equal to $a$;
    \item[(ii)] Suppose the following three conditions hold:
    \begin{itemize}
        \item[(a)] $\max\{\Var(\phi(\xi_{1:n}, \y_{1:m})\lvert \cF_m), \Var(\phi(\xi'_{1:n}, \y_{1:m})\lvert \cF_m)\} \leq C_v / n$ with $C_v>0$;
        \item[(b)] $\bbE[\phi(\xi'_{1:n}, \y_{1:m})\lvert \cF_m] = O(n^{-1/2})$;
        \item[(c)] $\lim_{n\rightarrow\infty}\sqrt{n}\bbE[\phi(\xi_{1:n}, \y_{1:m})\lvert \cF_m] = \infty.$
    \end{itemize}
Here $\cF_m = [y_{-n_0:0}, \y_{1:m}]$ and $\xi_{1:n}'$ is a key sequence generated in the same way as $\xi_{1:n}$ but is independent of $\y_{1:m}$. Given any $\epsilon>0$, when $T>2/\epsilon-1$,
\begin{align}\label{eq-power}
P(p_T\leq \alpha|\cF_m)\geq 1-C_1\exp(-2T\epsilon^2) + o(1),
\end{align}
as $n\rightarrow +\infty$, where $C_1>0.$
\end{itemize}}
\end{theorem}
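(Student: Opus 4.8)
Part (i) is the textbook conservativeness of a randomization $p$-value, which I would prove by an exchangeability argument conditional on $\cF_m$. Under $H_0$ the observed string $\y_{1:m}$ is independent of $\xi_{1:n}$, and by construction $\xi_{1:n}^{(1)},\dots,\xi_{1:n}^{(T)}$ are i.i.d.\ copies of $\xi_{1:n}$ that are also independent of $\y_{1:m}$; hence, given $\cF_m$, the $(T+1)$-vector $\big(\phi(\xi_{1:n},\y_{1:m}),\phi(\xi_{1:n}^{(1)},\y_{1:m}),\dots,\phi(\xi_{1:n}^{(T)},\y_{1:m})\big)$ is exchangeable. Assuming ties occur with probability zero (which holds for the statistics in Examples \ref{ex2}--\ref{ex3}), the rank of the first coordinate is then uniform on $\{1,\dots,T+1\}$, so $(T+1)p_T$ is uniform on $\{1,\dots,T+1\}$ and $P(p_T\le\alpha\mid\cF_m)=\lfloor(T+1)\alpha\rfloor/(T+1)$; integrating over $\cF_m$ gives the stated equality, while with ties the same counting gives ``$\le$''.

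For part (ii) I work conditionally on $\cF_m$ throughout, writing $\phi_0:=\phi(\xi_{1:n},\y_{1:m})$ and $\phi_t:=\phi(\xi_{1:n}^{(t)},\y_{1:m})$ for $t=1,\dots,T$. Given $\cF_m$, the resampled keys are independent of $\xi_{1:n}$, so $\phi_0\indep(\phi_1,\dots,\phi_T)$ and the $\phi_t$ are i.i.d.; moreover each $\phi_t$ has the same conditional law as $\phi(\xi'_{1:n},\y_{1:m})$, so conditions (a)--(b) apply to it. Since $\{p_T\le\alpha\}=\{N\le(T+1)\alpha-1\}$ with $N:=\sum_{t=1}^T\mathbf 1\{\phi_0\le\phi_t\}$, it suffices to bound $N$. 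Step 1: pick the deterministic cutoff $c_n:=\tfrac12\bbE[\phi_0\mid\cF_m]$, which by (c) satisfies $\sqrt n\,c_n\to\infty$ and in particular dominates $\bbE[\phi_t\mid\cF_m]=O(n^{-1/2})$ for $n$ large; Chebyshev together with (a) then gives $P(\phi_0\le c_n\mid\cF_m)\le 4C_v/\big(n\,\bbE[\phi_0\mid\cF_m]^2\big)=o(1)$ and $q_n:=P(\phi_t>c_n\mid\cF_m)\le 16C_v/\big(n\,\bbE[\phi_0\mid\cF_m]^2\big)=o(1)$. Step 2: on the event $A:=\{\phi_0>c_n\}$ one has $\mathbf 1\{\phi_0\le\phi_t\}\le\mathbf 1\{\phi_t>c_n\}$, so conditionally on $\cF_m$ and $A$ the count $N$ is stochastically dominated by a sum of $T$ i.i.d.\ $\mathrm{Bernoulli}(q_n)$ variables; Hoeffding's inequality yields, for any $\epsilon>0$, $P\big(N\ge T(q_n+\epsilon)\mid\cF_m,A\big)\le\exp(-2T\epsilon^2)$, and for $n$ large (so that $Tq_n$ is negligible) the stated condition $T>2/\epsilon-1$ ensures $T(q_n+\epsilon)\le(T+1)\alpha-1$, hence $P\big(N>(T+1)\alpha-1\mid\cF_m,A\big)\le\exp(-2T\epsilon^2)$ eventually.

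Assembling the pieces by a union bound over the two bad events,
\begin{align*}
P(p_T\le\alpha\mid\cF_m)\ \ge\ P(A\mid\cF_m)-P\big(N>(T+1)\alpha-1\mid\cF_m,A\big)\ \ge\ 1-\exp(-2T\epsilon^2)-P(A^{c}\mid\cF_m),
\end{align*}
and $P(A^{c}\mid\cF_m)=o(1)$ by Step 1, which is the claim (any $C_1>0$ absorbs the constants).

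The main obstacle is that the indicators $\mathbf 1\{\phi_0\le\phi_t\}$ are \emph{not} independent: they all involve the single random variable $\phi_0$, so one cannot directly concentrate $N$. The resolution is the two-stage conditioning above --- first peel off the high-probability event $A=\{\phi_0>c_n\}$ using (a) and (c), then on $A$ compare with the genuinely i.i.d.\ Bernoulli count $\sum_t\mathbf 1\{\phi_t>c_n\}$, controlled by (a)--(b). The delicate point within this is the placement of the cutoff $c_n$: it must sit well above the $n^{-1/2}$ scale of $\bbE[\phi_t\mid\cF_m]$ (to force $q_n\to0$) yet strictly below $\bbE[\phi_0\mid\cF_m]$ (to force $P(A^{c}\mid\cF_m)\to0$), and it is precisely the joint force of conditions (a)--(c) --- variance $O(1/n)$, null mean $O(n^{-1/2})$, alternative mean $\gg n^{-1/2}$ --- that leaves room for such a choice.
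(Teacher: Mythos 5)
Part (i) mirrors the paper's exchangeability argument; the only difference is cosmetic (you add a ties caveat, whereas the paper implicitly uses continuity of the key distribution to get the exact equality).

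For part (ii), your route is genuinely different from the paper's, and the core device --- introduce a deterministic cutoff $c_n=\tfrac12\mathbb E[\phi_0\mid\cF_m]$, peel off the high-probability event $A=\{\phi_0>c_n\}$ via Chebyshev and conditions (a), (c), and on $A$ dominate the rank count $N=\sum_{t=1}^T\mathbf 1\{\phi_0\le\phi_t\}$ stochastically by a Binomial$(T,q_n)$ with $q_n\to0$, to which Hoeffding applies --- is sound and arguably more transparent than the paper's argument, which instead works with the empirical quantile $q_{1-\alpha,T}=\varphi_{(T+2-j_\alpha)}$ and uses the Dvoretzky--Kiefer--Wolfowitz inequality to compare it with the population quantile $F^{-1}(\cdot)=O(n^{-1/2})$.

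However, your final bookkeeping step contains a genuine gap. You assert that $T>2/\epsilon-1$ together with $q_n\to0$ ensures $T(q_n+\epsilon)\le(T+1)\alpha-1$. This is false: $T>2/\epsilon-1$ is equivalent to $(T+1)\epsilon>2$, i.e.\ $T\epsilon>2-\epsilon$, which pushes $T\epsilon$ \emph{up}, not down. As $q_n\to0$, what you actually need is $T\epsilon<(T+1)\alpha-1$, equivalently $T(\alpha-\epsilon)>1-\alpha$, which requires $\epsilon<\alpha$ and $T>(1-\alpha)/(\alpha-\epsilon)$; neither is implied by $T>2/\epsilon-1$. Concretely, with $\alpha=0.05$, $\epsilon=0.1$, $T=20$ one has $T>2/\epsilon-1=19$ but $T\epsilon=2$ while $(T+1)\alpha-1=0.05$, so the claimed inequality fails. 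The source of the confusion is that the paper uses $T>2/\epsilon-1$ for a completely different purpose: it makes the discreteness correction $2/(T+1)\le\epsilon$, hence $(j_\alpha-1)/(T+1)\ge\alpha-\epsilon$, so that the empirical quantile level $1-(j_\alpha-1)/(T+1)+2\epsilon$ is upper-bounded by $1-\alpha+3\epsilon$. That condition plays no natural role in your decomposition; your argument instead needs an explicit constraint coupling $\epsilon$, $\alpha$, and $T$ (say $\epsilon<\alpha$ and $T$ large enough, or a Hoeffding deviation $\delta$ chosen in terms of $\alpha$ rather than forced to equal $\epsilon$), which you have not stated. To be fair, the paper's proof itself implicitly needs $\epsilon<\alpha/3$ so that $F^{-1}(1-\alpha+3\epsilon)=O(n^{-1/2})$, but its deduction from $T>2/\epsilon-1$ is correct, whereas yours is not.
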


We now apply the results in Theorem \ref{thm1} to Examples \ref{ex2}-\ref{ex3} with $m=n$ and $\phi(\xi_{1:n},\y_{1:n})=\mathcal{M}(\xi_{1:n},\y_{1:n})$ for $\mathcal{M}$ defined in (\ref{eq-stat1}) and (\ref{eq-stat2}).
\begin{corollary}\label{cor1}
If $\y_{1:n} = y_{1:n}$ and
\begin{equation}\label{eq:cond-examp}
\frac{1}{\sqrt{n}}\sum_{i=1}^n \bigl(1 - p(y_i|y_{-n_0:i-1})\bigr)\rightarrow\infty,
\end{equation}
then Conditions (a)-(c) in Theorem~\ref{thm1} are satisfied for Examples \ref{ex2}-\ref{ex3}. Consequently, the power of the randomization test converges to $1$ in these examples as $T\rightarrow +\infty$.
\end{corollary}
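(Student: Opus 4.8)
The plan is to verify Conditions~(a)--(c) of Theorem~\ref{thm1} for each of the two schemes and then invoke part~(ii) of that theorem. Throughout we are in the regime $m=n$ with $\y_{1:n}=y_{1:n}$, so $\cF_n=[y_{-n_0:0},y_{1:n}]$ determines every next-token distribution $\mu_i:=p(\cdot\mid y_{-n_0:i-1})$. The structural observation I would start from is a decoupling fact: since each $\mu_i$ is a deterministic function of $y_{1:i-1}$ and the prompt, conditioning on $\cF_n$ pins down all the $\mu_i$, and the conditional law of $\xi_{1:n}$ given $\cF_n$ has density proportional to $\prod_{i=1}^n\mathbf{1}\{\Gamma(\xi_i,\mu_i)=y_i\}$ against $\nu^{\otimes n}$; hence, given $\cF_n$, the true keys $\xi_1,\dots,\xi_n$ are independent, with $\xi_i$ distributed as a $\nu$-draw conditioned on $\Gamma(\xi_i,\mu_i)=y_i$, while the resampled sequence $\xi'_{1:n}$ remains i.i.d.\ $\nu$. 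Writing $\phi=\frac1n\sum_{i=1}^n h_i(\xi_i)$ with $h_i$ the $i$-th summand of $\cM$ (a function of the $i$-th key alone), conditional independence gives $\Var(\phi\mid\cF_n)=n^{-2}\sum_{i=1}^n\Var(h_i\mid\cF_n)$, so Condition~(a) reduces to a uniform bound $\Var(h_i\mid\cF_n)\le C_v$, and Conditions~(b),(c) reduce to evaluating $\bbE[h_i\mid\cF_n]$ under the resampled and true keys, respectively.

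For Example~\ref{ex3} I would use the exponential representation $E_{ik}=-\log(\xi_{ik})/\mu_i(k)\sim\text{Exp}(\mu_i(k))$ supplied in the example: then $h_i=\log(\xi_{i,y_i})+1=1-\mu_i(y_i)E_{i,y_i}$, and under the true key $\{y_i=k\}=\{E_{ik}=\min_j E_{ij}\}$, so by the independence of the minimum of independent exponentials from the index attaining it, $E_{i,y_i}\mid\cF_n\sim\text{Exp}(\sum_j\mu_i(j))=\text{Exp}(1)$. Hence $\bbE[h_i\mid\cF_n]=1-\mu_i(y_i)$ and $\Var(h_i\mid\cF_n)=\mu_i(y_i)^2\le1$, giving $\sqrt n\,\bbE[\phi\mid\cF_n]=n^{-1/2}\sum_{i=1}^n(1-p(y_i\mid y_{-n_0:i-1}))\to\infty$ by~\eqref{eq:cond-examp}, which is Condition~(c). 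Under the resampled key $\xi'_{i,y_i}\sim\text{Unif}[0,1]$, so $\bbE[h_i\mid\cF_n]=\int_0^1(\log u+1)\,du=0$ (Condition~(b)) and $\Var(h_i\mid\cF_n)=\int_0^1(\log u)^2\,du-1=1$ (Condition~(a)); thus $C_v=1$ suffices.

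For Example~\ref{ex2}, $h_i=(u_i-\tfrac12)\big(\tfrac{\pi_i(y_i)-1}{V-1}-\tfrac12\big)$ obeys $|h_i|\le1/4$ deterministically, so Condition~(a) holds with $C_v=1/16$; and because $u'_i$ is uniform on $[0,1]$ and independent of $\pi'_i(y_i)$, $\bbE[h_i\mid\cF_n]=0$ under the resampled key, which is Condition~(b). The substantive step is Condition~(c). Given $y_i=k$, the posterior of $(\pi_i,u_i)$ is the uniform prior restricted to $\{u_i\in(\mu_i(\{j:\pi_i(j)<\pi_i(k)\}),\,\mu_i(\{j:\pi_i(j)\le\pi_i(k)\})]\}$, whose $u_i$-section has length $\mu_i(k)$. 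Integrating $u_i-\tfrac12$ over that section, re-weighting by the permutation law---under which $\pi_i(k)$ is uniform on $[V]$ and, given $\pi_i(k)=r$, the $r-1$ tokens ranked before $k$ (those with $\pi_i(j)<\pi_i(k)$) form a uniform size-$(r-1)$ subset of $[V]\setminus\{k\}$ with expected mass $(r-1)(1-\mu_i(k))/(V-1)$---and normalizing by the posterior mass $\mu_i(k)$, I expect the constant contributions to cancel (since $V^{-1}\sum_{r=1}^V g(r)=0$ for $g(r)=\tfrac{r-1}{V-1}-\tfrac12$) and to be left with $\bbE[h_i\mid\cF_n]=c_V(1-\mu_i(k))$, where $c_V=\frac{V+1}{12(V-1)}>0$ (using $V^{-1}\sum_{r=1}^V g(r)(r-1)=\frac{V+1}{12}$). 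Then $\sqrt n\,\bbE[\phi\mid\cF_n]=c_V\,n^{-1/2}\sum_{i=1}^n(1-p(y_i\mid y_{-n_0:i-1}))\to\infty$, which is Condition~(c).

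Once Conditions~(a)--(c) are in hand for both schemes, Theorem~\ref{thm1}(ii) applies: for any $\epsilon>0$ and $T>2/\epsilon-1$, $\liminf_{n\to\infty}P(p_T\le\alpha\mid\cF_n)\ge1-C_1\exp(-2T\epsilon^2)$, and letting $T\to\infty$ pushes the right-hand side to $1$, so the power converges to $1$, as claimed. I expect the main obstacle to be the exact evaluation of the per-token conditional mean in Example~\ref{ex2}: one has to identify the posterior of the permutation-and-uniform pair given the realized token, carry out the combinatorial average, and---most importantly for the conclusion---confirm that the resulting $c_V$ is \emph{strictly positive}, so that the growth condition~\eqref{eq:cond-examp} transfers to $\sqrt n\,\bbE[\phi\mid\cF_n]$. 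A secondary delicate point is the conditional-independence-of-keys claim underlying the variance bound: the $\mu_i$ genuinely depend on earlier keys, and it is only after conditioning on $\cF_n$---which fixes all realized tokens---that the posterior over $\xi_{1:n}$ factorizes.
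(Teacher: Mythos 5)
Your proof is correct and takes essentially the same approach as the paper: you decouple the keys by conditioning on $\cF_n$ (the paper's Lemma~\ref{lemma1}), compute the conditional per-token mean and variance under both the true and resampled keys for each example, and conclude via Theorem~\ref{thm1}(ii). Your explicit constant $c_V=(V+1)/(12(V-1))$ for Example~\ref{ex2} is the value the paper denotes $C$, and is in fact slightly more precise than the paper's parenthetical remark that this variance is "less than $1/12$" (it actually exceeds $1/12$ for finite $V$, though only positivity is needed).
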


However, as the published text can be modified, it is not expected that every token in $\y_{1:m}$ will be related to the key sequence. Instead, we expect certain sub-strings of $\y_{1:m}$ to be correlated with the key sequence under the alternative hypothesis $H_a$. To measure the dependence, we use a scanning method that looks at every segment/sub-string of $\y_{1:m}$ and a segment of $\xi_{1:n}$ with the same length $B$. We use a measure $\M(\xi_{a:a+B-1},\y_{b:b+B-1})$ to quantify the dependence between $\xi_{a:a+B-1}$ and $\y_{b:b+B-1}$, chosen based on the watermarked text generation method described above. Given
$\mathcal{M}$ and the block size $B$, we can define the maximum test statistic as
\begin{align}\label{eq-M}
\phi(\xi_{1:n},\y_{1:m})=\max_{1\leq a\leq n-B+1}\max_{1\leq b\leq m-B+1}\M(\xi_{a:a+B-1},\y_{b:b+B-1}).
\end{align}

\begin{theorem}\label{thm-max}
{\rm
Consider the maximum statistic defined in (\ref{eq-M}), where the dependence measure takes the form of $\mathcal{M}(\xi_{a:a+B-1},\y_{b:b+B-1})=B^{-1}\sum^{B-1}_{i=0}h_i(\xi_{a+i},\y_{b+i})$ and $h_i$s are independent conditional on $[\y_{1:m},y_{-n_0:n}]$. Under the setting of Example \ref{ex2}, $\max_i |h_i|\leq 1/4.$ In this case, suppose
\begin{align}
&C_{N,B}^{-1}\max_{a,b}\mathbb{E}[\mathcal{M}(\xi_{a:a+B-1},\y_{b:b+B-1})|\y_{1:m},y_{-n_0:n}] \rightarrow +\infty,\label{c2}
\end{align}
where $N=\max\{n,m\}$ and $C_{N,B}=\sqrt{\log(N)/B}$. Then, (\ref{eq-power}) holds true. Under the setting of Example \ref{ex3}, $h_i$s are exponentially distributed conditional on $[\y_{1:m},y_{-n_0:n}]$. In this case, (\ref{eq-power}) is still true under (\ref{c2})
with $C_{N,B}=\log(N)/B.$
}
\end{theorem}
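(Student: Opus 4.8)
The plan is to verify that the maximum statistic $\phi$ in (\ref{eq-M}) satisfies Conditions (a)--(c) of Theorem~\ref{thm1} (with $\cF_m$ replaced by the conditioning $\sigma$-field $[\y_{1:m}, y_{-n_0:n}]$), since once those conditions hold the conclusion (\ref{eq-power}) follows verbatim from part (ii) of that theorem. The key point is that a maximum over $(n-B+1)(m-B+1) \leq N^2$ blocks inflates both the mean and the fluctuation relative to a single block, and the rate factor $C_{N,B}$ is precisely calibrated to absorb this inflation. So the work is a concentration-of-maxima argument on top of the single-block analysis.

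\textbf{Step 1: Single-block concentration.} Fix a pair $(a,b)$ and write $\M_{a,b} = B^{-1}\sum_{i=0}^{B-1} h_i(\xi_{a+i}, \y_{b+i})$, a normalized sum of conditionally independent terms. Under Example~\ref{ex2} the summands are bounded by $1/4$, so Hoeffding's inequality gives, conditional on $[\y_{1:m}, y_{-n_0:n}]$,
\begin{align*}
P\bigl(|\M_{a,b} - \bbE[\M_{a,b}]| \geq t\bigr) \leq 2\exp(-8Bt^2).
\end{align*}
Under Example~\ref{ex3} the $h_i$ are (conditionally) exponential, hence sub-exponential, and the corresponding Bernstein-type bound is $2\exp(-cB\min\{t, t^2\})$ for a constant $c>0$; this is why the rate degrades to $C_{N,B} = \log(N)/B$ rather than $\sqrt{\log(N)/B}$ --- the linear regime of the tail dominates when $t \asymp \log(N)/B$.

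\textbf{Step 2: Union bound over blocks.} Let $\mu^\star = \max_{a,b} \bbE[\M_{a,b} \mid \y_{1:m}, y_{-n_0:n}]$ and let $(a^\star, b^\star)$ attain it. On one hand, $\phi \geq \M_{a^\star, b^\star}$, so $\phi$ concentrates above $\mu^\star - t$ with the single-block probability from Step~1. On the other hand, when $\xi'_{1:n}$ is the independent (null) key sequence, every block mean $\bbE[\M'_{a,b}]$ is $O(B^{-1/2})$ in Example~\ref{ex2} (in fact it is exactly of constant order divided by $\sqrt{B}$, or even smaller --- one should check whether it is $0$) and is $O(1/\sqrt{B})$ in Example~\ref{ex3} as well, and a union bound over the $\leq N^2$ blocks gives
\begin{align*}
P\Bigl(\max_{a,b}\M'_{a,b} \geq \max_{a,b}\bbE[\M'_{a,b}] + t\Bigr) \leq 2N^2 \exp(-cBt^2)
\end{align*}
(with the $\min\{t,t^2\}$ modification in Example~\ref{ex3}). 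Choosing $t$ a large constant multiple of $C_{N,B}$ makes $2N^2\exp(-cBt^2) = 2N^2 N^{-c'} \to 0$ for the first example, and similarly for the second with the sub-exponential exponent. This establishes: (a) the conditional variance of $\phi$ and of $\phi'$ are controlled --- more precisely, $\phi$ and $\phi'$ concentrate within $O(C_{N,B})$ of their ``centering'' values, which plays the role of the variance bound in the proof of Theorem~\ref{thm1}(ii); (b) $\bbE[\phi' \mid \cF] = O(C_{N,B})$, since the expected maximum of the centered null blocks is $O(C_{N,B}\sqrt{\log N}/\sqrt{\log N}) = O(C_{N,B})$ by a standard maximal-inequality / Dudley-type bound, plus the $O(B^{-1/2})$ centering; and (c) $\bbE[\phi \mid \cF] \geq \mu^\star$, which under hypothesis (\ref{c2}) satisfies $\mu^\star / C_{N,B} \to \infty$, i.e. the separation between the alternative mean and the null level diverges on the $C_{N,B}$ scale. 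Feeding these into the argument behind (\ref{eq-power}) --- the event $\{p_T \leq \alpha\}$ is implied by $\phi$ exceeding the $(1-\alpha)$-ish empirical quantile of the $\phi^{(t)}$, which with high probability sits below $\mu^\star$ by Step~2 --- yields the bound $1 - C_1\exp(-2T\epsilon^2) + o(1)$ exactly as in Theorem~\ref{thm1}.

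\textbf{Main obstacle.} The delicate part is matching the exact form of Conditions (a)--(c) in Theorem~\ref{thm1}, which are phrased in terms of $\Var \leq C_v/n$ and means of order $n^{-1/2}$, to the maximum statistic, where the natural scale is $C_{N,B}$ rather than $n^{-1/2}$. One must re-run the proof of Theorem~\ref{thm1}(ii) with the generic rate $C_{N,B}$ in place of $n^{-1/2}$ throughout --- verifying that the randomization-test argument only uses the \emph{ratio} of the alternative signal to the null fluctuation scale, not the absolute rate $n^{-1/2}$ --- and, for Example~\ref{ex3}, carefully track the sub-exponential (rather than sub-Gaussian) tail so that the union bound over $N^2$ blocks still closes; this is precisely what forces the worse rate $C_{N,B} = \log(N)/B$ there. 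A secondary point requiring care is the conditional-independence structure: the $h_i$ are independent only \emph{within} a block given $[\y_{1:m}, y_{-n_0:n}]$, and blocks overlap, so the union bound (rather than any sharper chaining that exploits dependence) is the right tool, and one should confirm that overlapping does not spoil the per-block tail bounds (it does not, since each fixed block is handled marginally).
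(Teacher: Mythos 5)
Your proposal is correct and follows essentially the same route as the paper's proof: Hoeffding (resp.\ Bernstein) for per-block concentration, a union bound over the $\lesssim N^2$ block pairs to localize both $\phi$ and the null $\phi'$ within $O(C_{N,B})$ of their conditional means, and then re-running the Theorem~\ref{thm1}(ii) argument with $C_{N,B}$ playing the role of $n^{-1/2}$. The one point you leave hedged --- whether $\bbE[\M'_{a,b}\mid\cF_m]=0$ exactly --- the paper resolves affirmatively (it is exactly zero in both examples since $\xi'$ is independent of $\y$), though your $O(B^{-1/2})$ fallback is also harmless since $B^{-1/2}=o(C_{N,B})$.
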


\section{Sub-string identification}
In this section, we aim to address the following question, which seems less explored in the existing literature: given that the global null hypothesis $H_0$ is rejected, how can we identify the sub-strings from the modified text $\y_{1:m}$ that are machine-generated?

To describe the setup, we suppose the text published by the third-party user has the following structure:
\begin{align}\label{eq-seq}
\underbrace{\y_1\y_2\cdots \y_{\tau_1}}_{\text{non-watermarked}}\underbrace{\y_{\tau_1+1}\cdots \y_{\tau_2}}_{\text{watermarked}}\underbrace{\y_{\tau_2+1}\cdots\y_{\tau_3}}_{\text{non-watermarked}}
\underbrace{\y_{\tau_3+1}\cdots \y_{\tau_4}}_{\text{watermarked}}\cdots,
\end{align}
in which case the sub-strings $\y_{\tau_1+1}\cdots \y_{\tau_2}$ and $\y_{\tau_3+1}\cdots \y_{\tau_4}$ are watermarked. We emphasize that the orders of the watermarked and non-watermarked sub-strings can be arbitrary and do not affect our method described below. The goal here is to separate the text into watermarked and non-watermarked sub-strings accurately. Our key insight to tackling this problem is translating it into a change point detection problem. To describe our method, we define a sequence of moving windows $\mathcal{I}_i=[(i-B/2)\vee 1,(i+B/2)\wedge m]$ with $B$ being the window size which is assumed to be an even number (for the ease of presentation) and $1\leq i\leq m$. For each sub-string, we define a randomization-based $p$-value given by
\begin{align}\label{pvaluecalculation}
p_i = \frac{1}{T+1}\left(1+\sum^{T}_{t=1}\mathbf{1}\{\phi(\xi_{1:n},\y_{\mathcal{I}_i})\leq \phi(\xi_{1:n}^{(t)},\y_{\mathcal{I}_i})\}\right), \quad 1\leq i\leq m,
\end{align}
where we let $\phi(\xi_{1:n},\y_{\mathcal{I}_i})=\max_{1\leq k\leq n}\M(\xi_{\mathcal{J}_k},\y_{\mathcal{I}_i})$ with $\mathcal{J}_k=[(k-B/2)\vee 1,(k+B/2)\wedge n]$.

We have now transformed the text into a sequence of $p$-values: $p_1,\dots,p_m$. Under the null, the $p$-values are roughly uniformly distributed, while under the alternatives, the $p$-values will concentrate around zero. Consider a simple setup where the published text can be divided into two halves, with the first half watermarked and the second half non-watermarked (or vice versa). Then, we can identify the change point location through
\begin{align}\label{def-S}
\hat{\tau}=\argmax_{1\leq \tau<m}S_{1:m}(\tau),\quad S_{1:m}(\tau):=\sup_{t\in [0,1]}\frac{\tau(m-\tau)}{m^{3/2}}|F_{1:\tau}(t)-F_{\tau+1:m}(t)|,
\end{align}
where $F_{a:b}(t)$ denotes the empirical cumulative distribution function (cdf) of $\{p_i\}^{b}_{i=a}$.

We shall use the block bootstrap-based approach to determine if $\hat{\tau}$ is statistically significant. Note that conditional on $\mathcal{F}_m,$ the $p$-value sequence is $B$-dependent in the sense that $p_i$ and $p_j$ are independent only if $|i-j|>B$. Hence, the usual bootstrap or permutation methods for independent data may not work as they fail to capture the dependence from neighboring $p$-values. Instead, we can employ the so-called moving block bootstrap for time series data \citep{kunsch1989jackknife,liu1992moving}. Given a block size, say $B'$, we create $m-B'+1$ blocks given by $\{p_{i},\dots,p_{i+B'-1}\}$ for $1\leq i\leq m-B'+1$. We randomly sample $m/B'$ (assuming that $m/B'$ is an integer for simplicity) blocks with replacement and paste them together. Denote the resulting resampled $p$-values by $p_1^*,\dots,p_m^*$. We then compute $F^*_{a,b}(t)$
based on the bootstrapped $p$-values and define
\begin{align*}
S^*_{1:m}(\tau)=\sup_{t\in [0,1]}\frac{\tau(m-\tau)}{m^{3/2}}|F_{1:\tau}^*(t)-F_{\tau+1:m}^*(t)|.
\end{align*}
Repeat this procedure $T'$ times and denote the statistics by $S^{*,(t)}_{1:m}(\tau)$ for $t=1,2,\dots,T'$.
Define the corresponding bootstrap-based $p$-value as
\begin{align}\label{ptildestatistic}
\tilde{p}_{T'}=\frac{1}{T'+1}\left(1+\sum^{T'}_{t=1}\mathbf{1}\left\{\max_{1\leq \tau<m}S_{1:m}(\tau)\leq \max_{1\leq \tau<m}S^{*,(t)}_{1:m}(\tau)\right\}\right).
\end{align}
We claim that there is a statistically significant change point if $\tilde{p}_{T'}\leq \alpha$.

\begin{remark}
{\rm
Alternatively, one can consider the Cramér-von Mises type statistic $\max_{\tau}C_{1:m}(\tau)$ with
\begin{align*}
C_{1:m}(\tau)=\int^{1}_{0}\frac{\tau^2(m-\tau)^2}{m^{3}}|F_{1:\tau}(t)-F_{\tau+1:m}(t)|^2w(t)dt,
\end{align*}
to examine the existence of a change point, where $w(\cdot)$ is a non-negative weight function defined over $[0,1]$. If a change point exists, we can estimate its location through $\tilde{\tau}=\argmax_{1\leq \tau<m}C_{1:m}(\tau)$. Other methods to quantify the distributional shift for change point detection include the distance and kernel-based two-sample metrics \citep{matteson2014nonparametric,chakraborty2021high} and graph-based test statistics \citep{chen2015graph}.
}
\end{remark}

Consider the case where there is a single change point located at $\tau^*$. Without loss of generality, let us assume that before the change, the $p$-values are uniformly distributed, i.e., $p_1,\dots,p_{\tau^*}\sim F_0$ with $F_0(t)=t$ (the cdf of $\text{Unif}[0,1]$), while after the change, the $p$-values follow different distributions that concentrate around zero. We assume that $\liminf_{m\rightarrow +\infty}D(F_0,\mathbb{E}[F_{\tau^*+1:m}(t)])>0$, where $D(F_0,\mathbb{E}[F_{\tau^*+1:m}(t)]):=\sup_{t\in [0,1]}|F_0(t)-\mathbb{E}[F_{\tau^*+1:m}(t)]|$ is the Kolmogorov-Smirnov distance.
In other words, the empirical cdf of the $p$-values from the watermarked sub-strings
converges to a distribution that is different from the uniform distribution. This mild assumption allows for the detection of the structure break. As discussed before, we consider the scan statistic $\max_{1\leq \tau<m}S_{1:m}(\tau)$ with $S_{1:m}(\tau)$ defined in (\ref{def-S}) for examining the existence of a change point.

\begin{proposition}\label{prop-1}
Assuming that $B\rightarrow +\infty$ and $B/m\rightarrow0,$ we have
\begin{align*}
\max_{1\leq \tau<m} S_{1:m}(\tau)\geq \sqrt{m}\gamma^*(1-\gamma^*)D(F_0,\mathbb{E}[F_{\tau^*+1:m}(t)])+o_p(1)\rightarrow +\infty,
\end{align*}
where $\gamma^*=\lim_{m\rightarrow+\infty} \tau^*/m\in (0,1)$.
\end{proposition}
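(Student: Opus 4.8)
The plan is to lower-bound $\max_{1\leq \tau<m}S_{1:m}(\tau)$ by evaluating $S_{1:m}(\tau)$ at the single choice $\tau=\tau^*$, and then to show that $S_{1:m}(\tau^*)$ concentrates around its ``population'' counterpart $\sqrt{m}\,\gamma^*(1-\gamma^*)D(F_0,\mathbb{E}[F_{\tau^*+1:m}(t)])$. Since $\tau^*/m\to\gamma^*\in(0,1)$, the deterministic prefactor $\tau^*(m-\tau^*)/m^{3/2}$ equals $\sqrt{m}\,\gamma^*(1-\gamma^*)(1+o(1))$, so it suffices to control the supremum $\sup_{t\in[0,1]}|F_{1:\tau^*}(t)-F_{\tau^*+1:m}(t)|$ from below by $D(F_0,\mathbb{E}[F_{\tau^*+1:m}(t)])+o_p(1)$.

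First I would split each empirical cdf from its conditional mean via the triangle inequality: $|F_{1:\tau^*}(t)-F_{\tau^*+1:m}(t)| \geq |F_0(t)-\mathbb{E}[F_{\tau^*+1:m}(t)]| - |F_{1:\tau^*}(t)-F_0(t)| - |F_{\tau^*+1:m}(t)-\mathbb{E}[F_{\tau^*+1:m}(t)]|$, where I use that $\mathbb{E}[F_{1:\tau^*}(t)]=F_0(t)=t$ by the assumption that $p_1,\dots,p_{\tau^*}$ are uniform. Taking the supremum over $t$ and then over $\tau$ (at $\tau=\tau^*$), the first term yields exactly $D(F_0,\mathbb{E}[F_{\tau^*+1:m}(t)])$, which by assumption has $\liminf>0$. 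It remains to argue that the two fluctuation terms are $o_p(1)$ uniformly in $t$, i.e.\ a Glivenko--Cantelli-type statement: $\sup_t|F_{1:\tau^*}(t)-F_0(t)|=o_p(1)$ and $\sup_t|F_{\tau^*+1:m}(t)-\mathbb{E}[F_{\tau^*+1:m}(t)]|=o_p(1)$. The catch is that the $p$-values are not i.i.d.: conditional on $\mathcal{F}_m$ they are only $B$-dependent (and the $p$-values in the watermarked segment are not identically distributed either). To handle this, I would partition the indices into roughly $\lceil \tau^*/(B+1)\rceil$ (resp.\ $\lceil (m-\tau^*)/(B+1)\rceil$) groups of $p$-values spaced more than $B$ apart, so that within each group the summands are mutually independent; apply a one-sided Dvoretzky--Kiefer--Wolfowitz / VC inequality to each group (the class of indicator functions $\{\mathbf{1}\{\cdot\leq t\}:t\in[0,1]\}$ has VC dimension $1$, so a uniform bound of order $\sqrt{\log(\cdot)/(\text{group size})}$ holds); and recombine by the triangle inequality, noting each group has size $\asymp \tau^*/B\to\infty$ and $(m-\tau^*)/B\to\infty$ since $B/m\to0$ and $B\to\infty$. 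This gives uniform fluctuations of order $\sqrt{B\log m/m}=o_p(1)$, which is in fact stronger than the claimed $o_p(1)$ and also absorbs the $1+o(1)$ slack in the prefactor.

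The main obstacle is the second step: establishing the uniform-in-$t$ concentration of the empirical cdfs under the $B$-dependence and the non-identical distributions within the watermarked block. The non-identical-distribution issue is why the statement is phrased in terms of $\mathbb{E}[F_{\tau^*+1:m}(t)]$ (an average of the individual cdfs) rather than a single limiting law, and the grouping argument above needs the summands within a group to still be handled by a maximal/VC inequality even though they are not identically distributed — fortunately the one-sided bound $\sup_t|\frac{1}{k}\sum_{j}(\mathbf{1}\{p_{(j)}\leq t\}-\mathbb{E}\mathbf{1}\{p_{(j)}\leq t\})|$ for independent but not identical $p_{(j)}$ still follows from a symmetrization + VC argument with no change. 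Once the two fluctuation suprema are shown to be $o_p(1)$, combining with $\liminf D(F_0,\mathbb{E}[F_{\tau^*+1:m}])>0$ and the prefactor asymptotics delivers $\max_\tau S_{1:m}(\tau)\geq S_{1:m}(\tau^*)\geq \sqrt{m}\,\gamma^*(1-\gamma^*)D(F_0,\mathbb{E}[F_{\tau^*+1:m}(t)])+o_p(1)$, and since the leading term is of order $\sqrt{m}$ times a positive constant, the whole expression diverges to $+\infty$ in probability, completing the proof.
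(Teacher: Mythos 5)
Your proposal follows the same overall architecture as the paper's proof: lower-bound $\max_\tau S_{1:m}(\tau)$ by $S_{1:m}(\tau^*)$, note that the deterministic prefactor $\tau^*(m-\tau^*)/m^{2}\to\gamma^*(1-\gamma^*)$, and apply the triangle inequality to peel off the signal term $D(F_0,\mathbb{E}[F_{\tau^*+1:m}(t)])$ from two uniform fluctuation terms $\sup_t|F_{1:\tau^*}(t)-F_0(t)|$ and $\sup_t|F_{\tau^*+1:m}(t)-\mathbb{E}[F_{\tau^*+1:m}(t)]|$. Where you diverge is in the technical device for the uniform concentration step. The paper proves $\mathrm{Var}(F_{a+1:b}(t)\mid\mathcal{F}_m)=O(B/|b-a|)=o(1)$ pointwise and then invokes a Glivenko--Cantelli style monotonicity argument (Resnick, Theorem 7.5.2) to upgrade pointwise convergence to uniformity; you instead partition the $B$-dependent index set into $B+1$ independent subsequences, apply a DKW/VC bound within each group, and recombine. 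Both are standard routes through $m$-dependence and both reach the required conclusion.

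Two small points. First, your grouping description contains a transposition: you want $B+1$ groups each of size $\lceil\tau^*/(B+1)\rceil$ (consistent with your later remark ``each group has size $\asymp\tau^*/B$''), not $\lceil\tau^*/(B+1)\rceil$ groups. Second, the rate you quote, $O_p(\sqrt{B\log m/m})$, does not go to zero under the proposition's hypothesis $B/m\to0$ alone (take $B=m/\log m$, for which $B/m\to0$ but $B\log m/m=1$), so as written you would need the slightly stronger $B\log m/m\to0$. The fix is to avoid the union-bound logarithm: instead of controlling $\max_j\sup_t|\hat F_j(t)-\bar F_j(t)|$ over the $B+1$ groups, use the convex-combination bound $\sup_t|F_{1:\tau^*}(t)-F_0(t)|\leq\sum_j\frac{|G_j|}{\tau^*}\sup_t|\hat F_j(t)-\bar F_j(t)|$ together with $\mathbb{E}\sup_t|\hat F_j(t)-\bar F_j(t)|=O(\sqrt{B/m})$ (integrate the per-group DKW tail) and Markov's inequality, giving $O_p(\sqrt{B/m})=o_p(1)$ under $B/m\to0$, matching what the paper's variance-plus-Resnick argument delivers.
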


Next, we shall establish the consistency of the change point estimator  $\hat{\tau}=\argmax_{1\leq \tau<m}S_{1:m}(\tau)$. In particular, we obtain the following result, which establishes the convergence rate of the change point estimate.
\begin{theorem}\label{thm-cp-con}
Under the Assumptions in Proposition \ref{prop-1}, we have
\begin{align*}
|\hat{\tau}-\tau^*|= O_p\left(\frac{\sqrt{mB\log(m/B)}}{D(F_0,\mathbb{E}[F_{\tau^*+1:m}(t)])}\right).
\end{align*}
\end{theorem}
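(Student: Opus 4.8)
The plan is to follow the standard argument for rates of change-point estimators based on CUSUM-type statistics, adapting it to the empirical-cdf (Kolmogorov–Smirnov) functional and the $B$-dependence of the $p$-value sequence. Write $\Delta_m := D(F_0,\bbE[F_{\tau^*+1:m}(t)])$, which is bounded below by Proposition~\ref{prop-1}'s hypothesis, and let $\gamma^* = \lim \tau^*/m \in (0,1)$. First I would decompose $F_{1:\tau}(t)$ and $F_{\tau+1:m}(t)$ into their (conditional) means plus empirical fluctuations: for any $\tau$, $F_{1:\tau}(t) = \bar F_{1:\tau}(t) + R_{1:\tau}(t)$ where $\bar F_{1:\tau}(t) = \tau^{-1}\sum_{i\le \tau}\bbE[\ind\{p_i\le t\}\mid\cF_m]$ (so $\bar F_{1:\tau} = F_0$ when $\tau\le\tau^*-B$, since then all windows $\cI_i$ lie strictly in the non-watermarked region) and $R_{1:\tau}$ is a mean-zero empirical process. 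The key deterministic fact is that the population analogue $s(\tau) := m^{-3/2}\tau(m-\tau)\,D(\bar F_{1:\tau},\bar F_{\tau+1:m})$ is maximized near $\tau^*$ and has a \emph{curvature} there: for $|\tau-\tau^*|=\ell \gg B$ one gets $s(\tau^*) - s(\tau) \gtrsim \sqrt{m}\,\Delta_m \cdot (\ell/m)$ (a kink, not a parabola, because the KS distance of the ``contaminated'' average cdf degrades linearly in the mixing fraction $\ell/m$; the $O(B/m)$ transition region is negligible under $B/m\to 0$).

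Next I would control the stochastic term uniformly. Conditionally on $\cF_m$ the indicators $\ind\{p_i\le t\}$ form a $B$-dependent array, so I would split $\{1,\dots,m\}$ into $\lceil m/B\rceil$ blocks, and within each residue class of blocks the summands are independent; applying a maximal inequality (Bernstein/Hoeffding for the independent sub-blocks, combined over the $O(B)$ classes, together with a chaining/VC bound over $t\in[0,1]$ since the class $\{\ind\{\cdot\le t\}\}$ has VC dimension $1$) yields
\begin{align*}
\sup_{t}\sup_{1\le \tau<m}\sqrt{\tau\wedge(m-\tau)}\,\bigl|R_{1:\tau}(t)\bigr| = O_p\bigl(\sqrt{B\log(m/B)}\bigr),
\end{align*}
and similarly for $R_{\tau+1:m}$. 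Hence the gap between the random statistic $S_{1:m}(\tau)$ and its population counterpart $\sqrt{m}\,s(\tau)$ is, after multiplying through by the weights $\tau(m-\tau)/m^{3/2}$, of order $O_p(\sqrt{mB\log(m/B)}/\sqrt{\ell})$ uniformly over $\{|\tau-\tau^*|=\ell\}$ — the weight ratio $\tau(m-\tau)/m^{3/2}$ contributes $\Theta(\sqrt m)$ near $\tau^*$ while the normalization of $R$ contributes $1/\sqrt{\tau\wedge(m-\tau)}$, but on the event $|\hat\tau - \tau^*| = \ell$ the smaller of the two segments adjacent to the contaminated region has length $\asymp \ell$, which is what produces the $1/\sqrt\ell$.

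Then I would run the standard ``peeling'' contradiction: on the event $\{|\hat\tau-\tau^*| = \ell\}$ with $\ell \gg B$ we have $S_{1:m}(\hat\tau) \ge S_{1:m}(\tau^*)$, so combining the curvature lower bound on $s(\tau^*)-s(\hat\tau)$ with the uniform stochastic bound gives
\begin{align*}
\sqrt{m}\,\Delta_m\,\frac{\ell}{m} \;\lesssim\; s(\tau^*) - s(\hat\tau) \;\lesssim\; \frac{1}{\sqrt m}\cdot O_p\!\left(\frac{\sqrt{mB\log(m/B)}}{\sqrt{\ell}}\right),
\end{align*}
i.e. $\Delta_m\,\ell/\sqrt m \lesssim_p \sqrt{B\log(m/B)}/\sqrt{\ell\,m}$ after bookkeeping the $m$-powers, which rearranges to $\ell^{3/2} \lesssim_p \sqrt{m}\,\sqrt{B\log(m/B)}\cdot(\ldots)$ — carrying the constants carefully collapses to $\ell = O_p\bigl(\sqrt{mB\log(m/B)}/\Delta_m\bigr)$. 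A union bound over the dyadic scales $\ell \in \{2^j\}$ (there are only $O(\log m)$ of them, absorbed into the log factor) upgrades the ``for each $\ell$'' statement to the stated $O_p$ bound, and the residual range $|\hat\tau-\tau^*| \le CB$ is trivially within the claimed rate since $B \le \sqrt{mB\log(m/B)}/\Delta_m$ under $B/m\to 0$ and $\Delta_m$ bounded. The main obstacle I anticipate is making the curvature (identifiability) lower bound on $s(\tau^*)-s(\tau)$ fully rigorous and uniform: one must verify that $D(\bar F_{1:\tau},\bar F_{\tau+1:m})$, as a function of how much watermarked mass has leaked into each side, really does decay at least linearly in $\ell/m$ — this uses $\liminf \Delta_m>0$ but also needs a mild uniform-integrability/monotonicity-type control so that mixing a $1-\Theta(\ell/m)$ fraction of $F_0$ into $\bbE[F_{\tau^*+1:m}]$ cannot accidentally move its KS distance from $F_0$ down faster than linearly, and symmetrically that diluting the pre-change average away from $F_0$ costs at least $\Theta(\ell/m)$ in KS distance; handling the $O(B/m)$ window-overlap transition zone cleanly inside this bound is the fiddly part.
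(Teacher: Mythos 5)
Your high-level skeleton agrees with the paper's: decompose $I(\tau)=\sup_t\frac{1}{m}|\sum_{i\le\tau}(\ind\{p_i\le t\}-F_{1:m}(t))|$ into a deterministic drift that is linear in $|\tau-\tau^*|$ plus a mean-zero remainder, then balance the drift against a uniform-in-$\tau$ stochastic bound. The paper in fact obtains exactly the linear drift $I(\tau)=\frac{(m-\tau^*)\tau}{m^2}D(F_0,\bbE[F_{\tau^*+1:m}])+R(\tau)$ for $\tau\le\tau^*$, and then controls $\sup_\tau|R(\tau)|$ globally by a modified Ottaviani maximal inequality combined with odd/even blocking of the $B$-dependent indicators and Hoeffding, getting $\sup_\tau|R(\tau)|=O_p(\sqrt{B\log(m/B)/m})$; plugging this into $0\le I(\hat\tau)-I(\tau^*)$ gives the rate in one line. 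No peeling, no dyadic union bound, no chaining over $t$ is needed.

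The genuine gap in your argument is the ``peeling'' step. You claim that on $\{|\tau-\tau^*|=\ell\}$ the fluctuation gap $|S_{1:m}(\tau)-\sqrt{m}\,s(\tau)|$ is $O_p(\sqrt{mB\log(m/B)}/\sqrt{\ell})$, with the $1/\sqrt\ell$ coming from ``the smaller of the two segments adjacent to the contaminated region has length $\asymp\ell$.'' That is not correct here: for $\ell=o(m)$ both empirical cdfs $F_{1:\hat\tau}$ and $F_{\hat\tau+1:m}$ are computed over segments of length $\asymp m$ (since $\tau^*\asymp\gamma^*m$), so your own self-normalized bound gives a fluctuation gap of order $O_p(\sqrt{B\log(m/B)})$, constant in $\ell$, not decaying like $1/\sqrt\ell$. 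Moreover, a genuine $\sqrt\ell$-localization of $R(\hat\tau)-R(\tau^*)$ is delicate for the Kolmogorov--Smirnov functional, because the reverse triangle inequality only bounds $I(\hat\tau)-I(\tau^*)$ from above by a quantity involving the sum over $[\hat\tau+1,\tau^*]$; it does not give a matching lower bound, so you cannot cleanly trade drift against a local $\sqrt{\ell B}$ fluctuation the way one does for the pointwise CUSUM. Your bookkeeping exposes the issue: the displayed balance gives $\ell^{3/2}\Delta_m\lesssim\sqrt{mB\log(m/B)}$, i.e.\ $\ell\lesssim(mB\log(m/B))^{1/3}/\Delta_m^{2/3}$, which is strictly sharper than the theorem; the phrase ``carrying the constants carefully collapses to'' the claimed rate is precisely where the error is papered over. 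If you replace the spurious $1/\sqrt\ell$ with the correct $O_p(\sqrt{B\log(m/B)})$ global bound, the peeling and dyadic union bound become unnecessary and the balance $\ell\Delta_m/\sqrt m\lesssim\sqrt{B\log(m/B)}$ yields the stated rate directly, matching the paper.
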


\begin{remark}
{\rm Our scenario differs from the traditional nonparametric change point detection problem in a few ways:
(i) Rather than testing the homogeneity of the original data sequence (which is the setup typically considered in the change point literature), we convert the string into a sequence of p-values, based on which we conduct the change-point analysis; (ii) In the classical change point literature, the observations (in our case, the p-values) within the same segment are assumed to follow the same distribution. In contrast, for the watermark detection problem, the p-values from the watermarked segment could follow different distributions, adding a layer of difficulty to the analysis;
(iii) The p-value sequence is dependent (where the strength of dependence is controlled by $B$), making our setup very different from the one in \cite{carlstein1988nonparametric}, which assumed the underlying data sequence to be independent; (iv)
The technical tool used in our analysis must account for the particular dependence structure within the p-value sequence.
}
\end{remark}

\subsection{Binary segmentation}
In this section, we describe an algorithm to separate watermarked and non-watermarked sub-strings by identifying multiple change point locations. There are two main types of algorithms for identifying multiple change points in the literature: (i) exact or approximate optimization by minimizing a penalized cost function \citep{harchaoui2010multiple,TRUONG2020107299, doi:10.1080/01621459.2012.737745,li2024fastcpd,zhang2023sequential} and (ii) approximate segmentation algorithms. Our proposed algorithm is based on the popular binary segmentation method, a top-down approximate segmentation approach for finding multiple change point locations. Initially proposed by \cite{vostrikova1981detecting}, binary segmentation identifies a single change point in a dataset using a CUSUM-like procedure and then employs a divide-and-conquer approach to find additional change points within sub-segments until a stopping condition is reached. However, as a greedy algorithm, binary segmentation can be less effective with multiple change points. Wild binary segmentation (WBS) \citep{10.1214/14-AOS1245} and seeded binary segmentation (SeedBS) \citep{10.1093/biomet/asac052} improve upon this by defining multiple segments to identify and aggregate potential change points. SeedBS additionally addresses the issue of overly long sub-segments in WBS that may contain several change points. SeedBS uses multiple layers of intervals, each with a fixed number of intervals of varying lengths and shifts, to enhance the search for change points. When comparing multiple candidates for the next change point, the narrowest-over-threshold (NOT) method \citep{10.1111/rssb.12322} prioritizes narrower sub-segments. Built upon these ideas, we develop an effective algorithm to identify the change points that separate watermarked and non-watermarked sub-strings. The details are described in Algorithm~\ref{algorithm:seedbs-not} below.

\begin{algorithm}
  \caption{SeedBS-NOT for change point detection in potentially partially watermarked texts}
  \label{algorithm:seedbs-not}
  \begin{algorithmic}
  \Require Sequence of $p$-values $\{p_i\}_{i=1}^m$, decay parameter $a \in [1/2, 1)$ for SeedBS, threshold $\zeta$ for NOT.
  \Ensure Locations of the change points.
  \State Define $I_1 = (0, m]$.  \Comment{Start of SeedBS}
  \For{$k\gets 2, \ldots, \lceil \log_{1/a}m \rceil$}
    \State Number of intervals in the $k$-th layer: $n_k = 2 \lceil (1/a)^{k-1} \rceil - 1$.
    \State Length of intervals in the $k$-th layer: $l_k = m a^{k - 1}$.
    \State Shift of intervals in the $k$-th layer: $s_k = (m - l_k) / (n_k - 1)$.
    \State $k$-th layer intervals: $\mathcal{I}_k = \bigcup_{i=1}^{n_k}\{(\lfloor (i-1)s_k \rfloor, \lceil (i-1)s_k + l_k \rceil]\}$.
  \EndFor
  \State Define all seeded intervals $\mathcal{I} = \bigcup_{k = 1}^{\lceil \log_{1/a}m \rceil}\mathcal{I}_k$.  \Comment{End of SeedBS}
  \For{$i \gets 1, \ldots, \lvert \mathcal{I} \rvert$}  \Comment{Start of NOT}
    \State Define the $i$-th interval $I_i = (r_i, s_i]$.
    \State Define $S_{r_i+1:s_i}(\tau):=\sup_{t\in [0,1]}\frac{(\tau - r_i)(s_i - \tau)}{(s_i-r_i)^{3/2}}|F_{r_i+1:\tau}(t)-F_{\tau+1:s_i}(t)|.$
    \State Let $\hat{\tau}_i = \argmax_{r_i < \tau \le s_i}S_{r_i+1:s_i}(\tau)$.
    \State Obtain $\tilde{p}_i$ through block bootstrap \eqref{ptildestatistic}.
  \EndFor
  \State Define the set of potential change point locations $\mathcal{O} = \{i: \tilde{p}_i < \zeta\}$ and the final set of change point locations $\mathcal{S} = \emptyset$.
  \While{$\mathcal{O} \not= \emptyset$}
    \State Select $i = \argmin_{i = 1, \ldots, \lvert \mathcal{O} \rvert} \{ \lvert I_i \rvert \} = \argmin_{i = 1, \ldots, \lvert \mathcal{O} \rvert} \{s_i - r_i\}$.
    \State $\mathcal{S} \gets \mathcal{S} \cup \{ \hat{\tau}_i \}; \quad \mathcal{O} \gets \{ j \le \lvert \mathcal{O} \rvert: \hat{\tau}_i \not\in I_j \}$.
  \EndWhile
  \State \Return $\mathcal{S}$.  \Comment{End of NOT}
  \end{algorithmic}
\end{algorithm}

\section{Numerical experiments}\label{section:numerican-experiments}
We conduct extensive real-data-based experiments following a similar empirical setting in \cite{pmlr-v202-kirchenbauer23a}, where we generate watermarked text based on the prompts sampled from the news-like subset of the colossal clean crawled corpus (C4) dataset \citep{JMLR:v21:20-074}. We utilized three LLMs, namely \verb|openai-community/gpt2| \citep{radford2019language}, \verb|facebook/opt-1.3b| \citep{zhang2022opt} and  \verb|Meta-Llama-3-8B|~\citep{llama3modelcard}, to evaluate the effectiveness of the proposed method. We consider the following four watermark generation and detection methods:
\begin{itemize}
\item \verb|ITS|: The inverse transform sampling method with the dependence measure defined in \eqref{eq-stat1}.
\item \verb|ITSL|: The inverse transform sampling method with the dependence measure defined in \eqref{eq:l-dep-mea}, which is based on the Levenshtein cost \eqref{eq:legenshtein} with base alignment cost~\eqref{eq:base-inverse}.
\item \verb|EMS|: The exponential minimum sampling method with the dependence measure defined in \eqref{eq-stat2}.
\item \verb|EMSL|: The exponential minimum sampling method with the dependence measure defined in \eqref{eq:base-inverse}, which is based on the Levenshtein cost \eqref{eq:legenshtein} with base alignment cost~\eqref{eq:base-exp}.
\end{itemize}
The details of the Levenshtein cost are deferred to Appendix~\ref{section:Levenshtein}. For each of the experiment settings, $100$ prompts were used to generate the watermarked text. We fix the length of text $m = 500$, the size of sliding window $B = 20$, and the block size used in the block bootstrap-based test $B' = 20$. Results for other choices of $B$ are shown in Appendix~\ref{app:tun-para}. In Algorithm~\ref{algorithm:seedbs-not}, we set the decay parameter $a = \sqrt{2}$ and the minimum length of the intervals generated by SeedBS to be 50 such that the block bootstrap-based test is meaningful, and the threshold $\zeta\in \{0.05, 0.01, 0.005, 0.001\}$. We present the results for \verb|openai-community/gpt2| in the main text and defer the results for \verb|facebook/opt-1.3b| and \verb|Meta-Llama-3-8B| to Appendix \ref{section:results-for-llm-facebook-opt} and Appendix \ref{section:results-for-llm-meta-llama-ml3}, respectively.

\subsection{False positive analysis}
\label{section:falsepositiveanalysis}

We begin by analyzing the false discoveries of the change point detection method. We will generate watermarked text with a length of $m = 500$, where no change points exist.
\begin{itemize}
\item Setting 1 (no change point): Generate 500 tokens with a watermark.
\end{itemize}
The results for \verb|openai-community/gpt2| are showed in Figure~\ref{fig:false-positives-pvalues}. The left panel illustrates that the two exponential minimum sampling methods result in fewer false discoveries compared to the two inverse transform sampling methods. Indeed, the existence of false discoveries highly depends on the quality of the obtained sequence of $p$-values. In the right panel, we fixed one prompt and plotted the sequence of $p$-values for the four methods. Most of the $p$-values are near $0$. However, certain sub-strings have relatively high $p$-values for the two inverse transform methods, indicating that these methods failed to detect the watermark in these segments, resulting in false discoveries for change point detection.

\begin{figure}
    \centering
    \begin{minipage}{.48\textwidth}
        \centering
        \includegraphics[width=\linewidth, height=0.17\textheight]{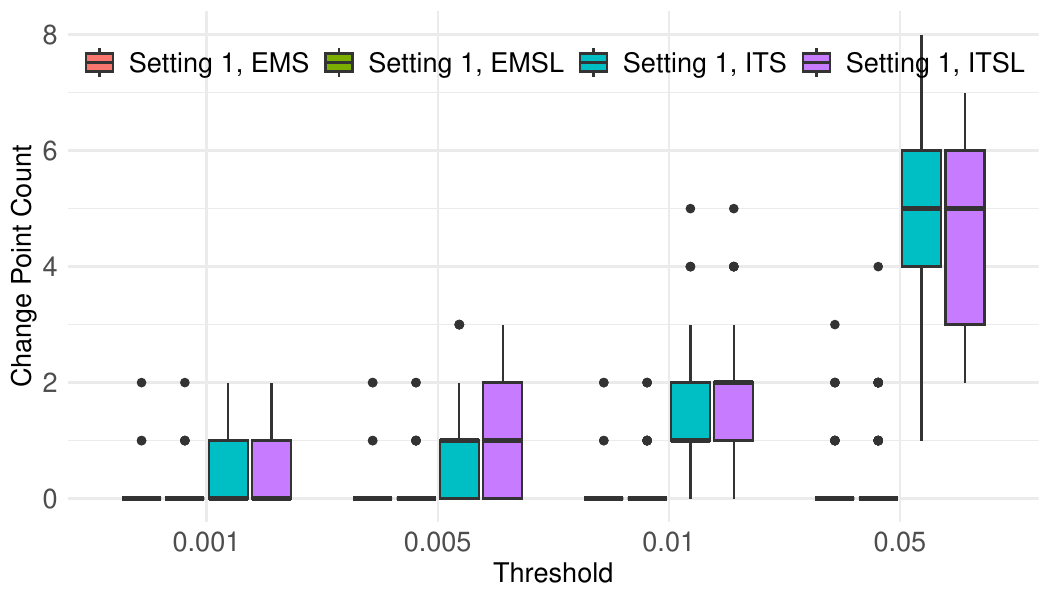}
    \end{minipage}%
    \begin{minipage}{0.48\textwidth}
        \centering
        \includegraphics[width=\linewidth, height=0.17\textheight]{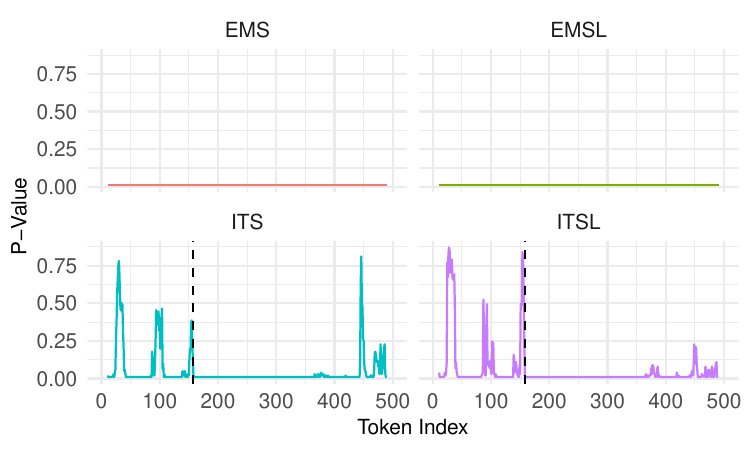}
    \end{minipage}
    \caption{Left panel: boxplots of the number of false detections with respect to different thresholds $\zeta$. Right panel: sequences of $p$-values from different methods in Setting~1 for Prompt~1 with threshold $\zeta = 0.005$. The detected change point locations are marked with dashed lines at the index $157$ for \texttt{ITS} and $158$ for \texttt{ITSL}, respectively.}
    \label{fig:false-positives-pvalues}
\end{figure}

\subsection{Change point analysis}
When users modify the text generated by LLM, there may be some sub-strings with watermarks and others without. Our goal is to accurately separate the text into watermarked and non-watermarked sub-strings. In this section, we will focus on two types of attacks: insertion and substitution. To demonstrate, we will consider the following three settings:
\begin{itemize}
    \item Setting 2 (insertion attack): Generate $250$ tokens with watermarks, then append with $250$ tokens without watermarks. In this setting, there is a single change point at the index $251$.
    \item Setting 3 (substitution attack): Generate $500$ tokens with watermarks, then substitute the token with indices ranging from $201$ to $300$ with non-watermarked text of length $100$. In this setting, there are two change points at the indices $201$ and $301$, respectively.
    \item Setting 4 (insertion and substitution attacks): Generate $400$ tokens with watermarks, substitute the token with indices ranging from $101$ to $200$ with non-watermarked text of length $100$, and then insert $100$ tokens without watermarks at the index $300$. In this setting, there are four change points located at the indices $101$, $201$, $301$, and $401$, respectively.
\end{itemize}
For more complex simulation settings, please refer to Appendix~\ref{app:complex-setting}.

We compare the clusters identified through the detected change points with the true clusters separated by the true change points using the Rand index \citep{doi:10.1080/01621459.1971.10482356}. A higher Rand index indicates better performance of different methods.
\begin{figure}
    \centering
    \includegraphics[width=0.95\textwidth]{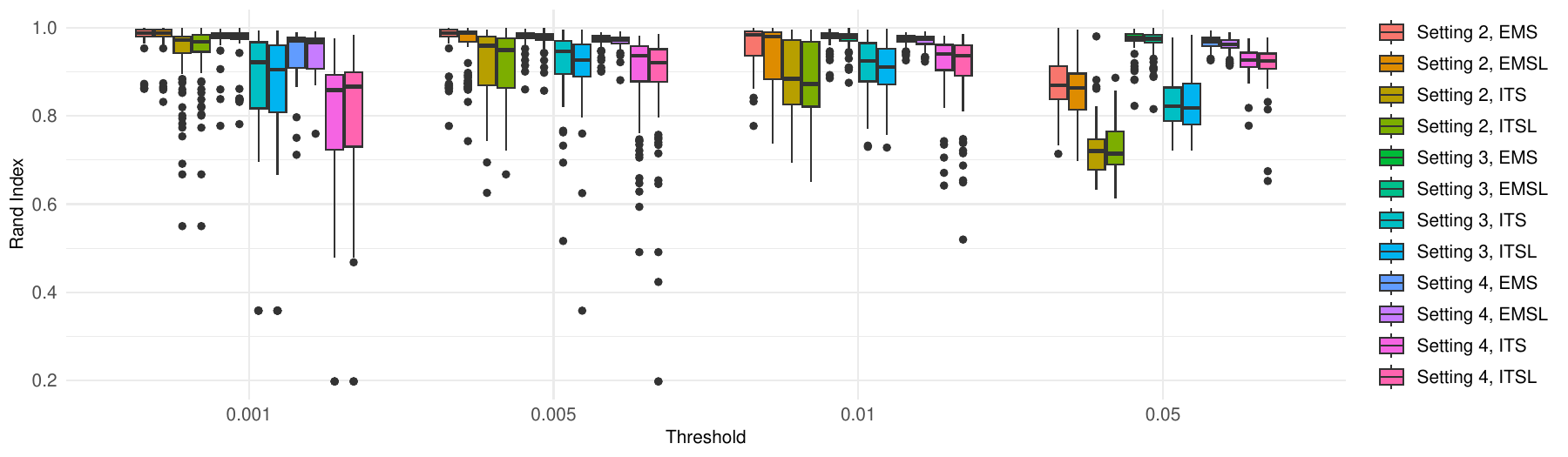}
    \caption{The boxplots of the Rand index comparing the clusters identified through the detected change points with the true clusters separated by the true change points with respect to different thresholds $\zeta$.}
    \label{fig:rand-index}
\end{figure}

Figure~\ref{fig:rand-index} shows the Rand index for four methods in Settings 2-4 corresponding to different thresholds $\zeta$. For each method, their performance in Settings 2-3 is better than that of Setting 4 when the threshold $\zeta \leq 0.01$. This is because Setting 4 includes two types of attacks, making the problem more difficult than in Settings 2 and 3. In all cases, the two exponential minimum sampling methods outperform the two inverse transform sampling methods, and \verb|EMS| delivers the highest Rand index value. We want to emphasize again that the performance of the change point detection method highly depends on the quality of the obtained sequence of $p$-values. Figure~\ref{fig:100-100goodp-value-vs-bad-p-value} shows the $p$-value sequence for all methods given one fixed prompt in Setting 4. The change points detected by the \texttt{EMS} and \texttt{EMSL} methods are closer to the true change points compared to those detected by the \texttt{ITS} and \texttt{ITSL} methods. Additionally, the sequence of $p$-values for all methods in Settings 1-4 with the first 10 prompts extracted Google C4 dataset are shown in Figure~\ref{fig:seq-10-p} in the Appendix.

\begin{figure}
    \centering
    \includegraphics[width=0.7\textwidth]{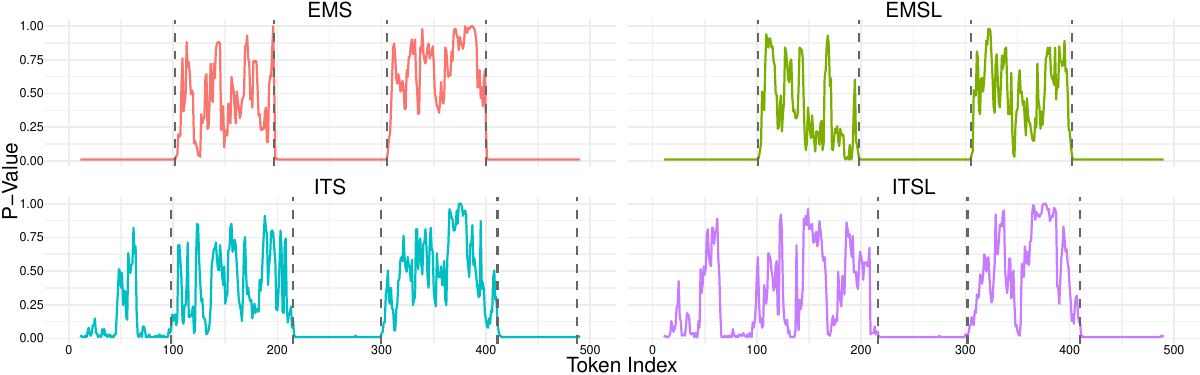}
    \caption{Sequences of $p$-values for all methods given one fixed prompt in Setting 4 with the threshold $\zeta=0.005$. The true change points are located at $101$, $201$, $301$, and $401$. The change points detected by the \texttt{EMS} and \texttt{EMSL} methods are closer to the actual change points compared to those detected by the \texttt{ITS} and \texttt{ITSL} methods.}
    \label{fig:100-100goodp-value-vs-bad-p-value}
\end{figure}

\section{Discussions}
In this study, we have introduced a method for detecting whether a text is generated using LLM through randomization tests. We have demonstrated that our method effectively controls Type I and Type II errors under appropriate assumptions. Additionally, we have developed a technique to partition the published text into watermarked and non-watermarked sub-strings by treating it as a change point detection problem. Our proposed method accurately identifies watermarked sub-strings by determining the locations of change points. Simulation results using real data indicate that the \texttt{EMS} method outperforms the other methods.

The performance of the segmentation algorithm depends crucially on the quality of the randomization-based $p$-values from each sub-string. Intuitively, a more significant discrepancy between the $p$-value distributions under the null and alternative will lead to better segmentation results. Thus, a powerful watermark detection algorithm is crucial to the success of the segmentation procedure. Motivated by Condition (\ref{eq:cond-examp}), an interesting future direction is to develop an adaptive watermark generation and detection procedure where the LLM provider adaptively embeds the key according to NTP, and the detector uses a corresponding adaptive procedure to detect the watermark.

Another interesting direction to explore is extending the algorithm to handle scenarios where the published text combines watermarked texts from different LLMs with varying watermark generation schemes. In this case, the goal is to separate the texts into sub-strings from different LLMs. This scenario involves multiple sequences of keys from different LLMs, each producing a sequence of $p$-values and change points. Figuring out how to aggregate these results to separate different LLMs and user-modified texts would be an intriguing problem.

\begin{ack}
GL and XZ were supported by the National Institutes of Health under Grant R01GM144351. Part of the research was conducted using the
Arseven Computing Cluster at the Department of Statistics, Texas A\&M University.
\end{ack}

\bibliographystyle{plainnat}
\bibliography{ref}

\begin{thebibliography}{39}
\providecommand{\natexlab}[1]{#1}
\providecommand{\url}[1]{\texttt{#1}}
\expandafter\ifx\csname urlstyle\endcsname\relax
  \providecommand{\doi}[1]{doi: #1}\else
  \providecommand{\doi}{doi: \begingroup \urlstyle{rm}\Url}\fi

\bibitem[Aaronson(2023)]{Aaronson2023}
Scott Aaronson.
\newblock Watermarking of large language models, 2023.
\newblock URL
  \url{https://simons.berkeley.edu/talks/scott-aaronson-ut-austin-openai-2023-08-17}.

\bibitem[Abdelnabi and Fritz(2021)]{abdelnabi2021adversarial}
Sahar Abdelnabi and Mario Fritz.
\newblock Adversarial watermarking transformer: Towards tracing text provenance
  with data hiding.
\newblock In \emph{2021 IEEE Symposium on Security and Privacy (SP)}, pages
  121--140. IEEE, 2021.
\newblock URL \url{https://doi.org/10.1109/SP40001.2021.00083}.

\bibitem[AI@Meta(2024)]{llama3modelcard}
AI@Meta.
\newblock Llama 3 model card.
\newblock \url{https://github.com/meta-llama/llama3/blob/main/MODEL_CARD.md},
  2024.
\newblock URL
  \url{https://github.com/meta-llama/llama3/blob/main/MODEL_CARD.md}.

\bibitem[Atallah et~al.(2001)Atallah, Raskin, Crogan, Hempelmann, Kerschbaum,
  Mohamed, and Naik]{atallah2001natural}
Mikhail~J Atallah, Victor Raskin, Michael Crogan, Christian Hempelmann, Florian
  Kerschbaum, Dina Mohamed, and Sanket Naik.
\newblock Natural language watermarking: Design, analysis, and a
  proof-of-concept implementation.
\newblock In \emph{Information Hiding: 4th International Workshop, IH 2001
  Pittsburgh, PA, USA, April 25--27, 2001 Proceedings 4}, pages 185--200.
  Springer, 2001.
\newblock URL \url{https://doi.org/10.1007/3-540-45496-9_14}.

\bibitem[Baranowski et~al.(2019)Baranowski, Chen, and
  Fryzlewicz]{10.1111/rssb.12322}
Rafal Baranowski, Yining Chen, and Piotr Fryzlewicz.
\newblock {Narrowest-Over-Threshold Detection of Multiple Change Points and
  Change-Point-Like Features}.
\newblock \emph{Journal of the Royal Statistical Society Series B: Statistical
  Methodology}, 81\penalty0 (3):\penalty0 649--672, 05 2019.
\newblock ISSN 1369-7412.
\newblock \doi{10.1111/rssb.12322}.
\newblock URL \url{https://doi.org/10.1111/rssb.12322}.

\bibitem[Cai et~al.(2024)Cai, Liu, Wang, Zhong, and Li]{cai2024towards}
Zhongze Cai, Shang Liu, Hanzhao Wang, Huaiyang Zhong, and Xiaocheng Li.
\newblock Towards better statistical understanding of watermarking llms.
\newblock \emph{arXiv preprint arXiv:2403.13027}, 2024.
\newblock URL \url{https://doi.org/10.48550/arXiv.2403.13027}.

\bibitem[Carlstein(1988)]{carlstein1988nonparametric}
Edward Carlstein.
\newblock Nonparametric change-point estimation.
\newblock \emph{The Annals of Statistics}, 16\penalty0 (1):\penalty0 188--197,
  1988.
\newblock URL \url{https://doi.org/10.1214/aos/1176350699}.

\bibitem[Chakraborty and Zhang(2021)]{chakraborty2021high}
Shubhadeep Chakraborty and Xianyang Zhang.
\newblock High-dimensional change-point detection using generalized homogeneity
  metrics.
\newblock \emph{arXiv preprint arXiv:2105.08976}, 2021.
\newblock URL \url{https://doi.org/10.48550/arXiv.2105.08976}.

\bibitem[Chen and Zhang(2015)]{chen2015graph}
Hao Chen and Nancy Zhang.
\newblock Graph-based change-point detection.
\newblock \emph{Annals of Statistics}, 2015.
\newblock URL \url{https://doi.org/10.1214/14-AOS1269}.

\bibitem[Fernandez et~al.(2023)Fernandez, Chaffin, Tit, Chappelier, and
  Furon]{fernandez2023three}
Pierre Fernandez, Antoine Chaffin, Karim Tit, Vivien Chappelier, and Teddy
  Furon.
\newblock Three bricks to consolidate watermarks for large language models.
\newblock In \emph{2023 IEEE International Workshop on Information Forensics
  and Security (WIFS)}, pages 1--6. IEEE, 2023.
\newblock URL \url{https://doi.org/10.48550/arXiv.2308.00113}.

\bibitem[Fryzlewicz(2014)]{10.1214/14-AOS1245}
Piotr Fryzlewicz.
\newblock {Wild binary segmentation for multiple change-point detection}.
\newblock \emph{The Annals of Statistics}, 42\penalty0 (6):\penalty0 2243 --
  2281, 2014.
\newblock \doi{10.1214/14-AOS1245}.
\newblock URL \url{https://doi.org/10.1214/14-AOS1245}.

\bibitem[Harchaoui and L{\'e}vy-Leduc(2010)]{harchaoui2010multiple}
Za{\i}d Harchaoui and C{\'e}line L{\'e}vy-Leduc.
\newblock Multiple change-point estimation with a total variation penalty.
\newblock \emph{Journal of the American Statistical Association}, 105\penalty0
  (492):\penalty0 1480--1493, 2010.

\bibitem[Hopper et~al.(2007)Hopper, Molnar, and Wagner]{hopper2007weak}
Nicholas Hopper, David Molnar, and David Wagner.
\newblock From weak to strong watermarking.
\newblock In \emph{Theory of Cryptography: 4th Theory of Cryptography
  Conference, TCC 2007, Amsterdam, The Netherlands, February 21-24, 2007.
  Proceedings 4}, pages 362--382. Springer, 2007.
\newblock URL \url{https://doi.org/10.1007/978-3-540-70936-7_20}.

\bibitem[Hu et~al.(2023)Hu, Chen, Wu, Wu, Zhang, and Huang]{hu2023unbiased}
Zhengmian Hu, Lichang Chen, Xidong Wu, Yihan Wu, Hongyang Zhang, and Heng
  Huang.
\newblock Unbiased watermark for large language models.
\newblock \emph{arXiv preprint arXiv:2310.10669}, 2023.
\newblock URL \url{https://doi.org/10.48550/arXiv.2310.10669}.

\bibitem[Huang et~al.(2023)Huang, Zhu, Zhu, Lee, Jiao, and
  Jordan]{huang2023towards}
Baihe Huang, Banghua Zhu, Hanlin Zhu, Jason~D Lee, Jiantao Jiao, and Michael~I
  Jordan.
\newblock Towards optimal statistical watermarking.
\newblock \emph{arXiv preprint arXiv:2312.07930}, 2023.
\newblock URL \url{https://doi.org/10.48550/arXiv.2312.07930}.

\bibitem[Kirchenbauer et~al.(2023{\natexlab{a}})Kirchenbauer, Geiping, Wen,
  Katz, Miers, and Goldstein]{pmlr-v202-kirchenbauer23a}
John Kirchenbauer, Jonas Geiping, Yuxin Wen, Jonathan Katz, Ian Miers, and Tom
  Goldstein.
\newblock A watermark for large language models.
\newblock In Andreas Krause, Emma Brunskill, Kyunghyun Cho, Barbara Engelhardt,
  Sivan Sabato, and Jonathan Scarlett, editors, \emph{Proceedings of the 40th
  International Conference on Machine Learning}, volume 202 of
  \emph{Proceedings of Machine Learning Research}, pages 17061--17084. PMLR,
  23--29 Jul 2023{\natexlab{a}}.
\newblock URL \url{https://proceedings.mlr.press/v202/kirchenbauer23a.html}.

\bibitem[Kirchenbauer et~al.(2023{\natexlab{b}})Kirchenbauer, Geiping, Wen,
  Shu, Saifullah, Kong, Fernando, Saha, Goldblum, and
  Goldstein]{kirchenbauer2023reliability}
John Kirchenbauer, Jonas Geiping, Yuxin Wen, Manli Shu, Khalid Saifullah, Kezhi
  Kong, Kasun Fernando, Aniruddha Saha, Micah Goldblum, and Tom Goldstein.
\newblock On the reliability of watermarks for large language models.
\newblock \emph{arXiv preprint arXiv:2306.04634}, 2023{\natexlab{b}}.
\newblock URL \url{https://doi.org/10.48550/arXiv.2306.04634}.

\bibitem[Kovács et~al.(2022)Kovács, Bühlmann, Li, and
  Munk]{10.1093/biomet/asac052}
S~Kovács, P~Bühlmann, H~Li, and A~Munk.
\newblock {Seeded binary segmentation: a general methodology for fast and
  optimal changepoint detection}.
\newblock \emph{Biometrika}, 110\penalty0 (1):\penalty0 249--256, 10 2022.
\newblock ISSN 1464-3510.
\newblock \doi{10.1093/biomet/asac052}.
\newblock URL \url{https://doi.org/10.1093/biomet/asac052}.

\bibitem[Kuditipudi et~al.(2023)Kuditipudi, Thickstun, Hashimoto, and
  Liang]{kuditipudi2023robust}
Rohith Kuditipudi, John Thickstun, Tatsunori Hashimoto, and Percy Liang.
\newblock Robust distortion-free watermarks for language models.
\newblock \emph{arXiv preprint arXiv:2307.15593}, 2023.
\newblock URL \url{https://doi.org/10.48550/arXiv.2307.15593}.

\bibitem[Kunsch(1989)]{kunsch1989jackknife}
Hans~R Kunsch.
\newblock The jackknife and the bootstrap for general stationary observations.
\newblock \emph{The annals of Statistics}, pages 1217--1241, 1989.
\newblock URL \url{https://doi.org/10.1214/aos/1176347265}.

\bibitem[Lahiri(1999)]{lahiri1999theoretical}
Soumendra~N Lahiri.
\newblock Theoretical comparisons of block bootstrap methods.
\newblock \emph{Annals of Statistics}, pages 386--404, 1999.

\bibitem[Li et~al.(2024)Li, Ruan, Wang, Long, and Su]{li2024statistical}
Xiang Li, Feng Ruan, Huiyuan Wang, Qi~Long, and Weijie~J Su.
\newblock A statistical framework of watermarks for large language models:
  Pivot, detection efficiency and optimal rules.
\newblock \emph{arXiv preprint arXiv:2404.01245}, 2024.
\newblock URL \url{https://doi.org/10.48550/arXiv.2404.01245}.

\bibitem[Li and Zhang(2024)]{li2024fastcpd}
Xingchi Li and Xianyang Zhang.
\newblock fastcpd: Fast change point detection in r.
\newblock \emph{arXiv preprint arXiv:2404.05933}, 2024.
\newblock URL \url{https://doi.org/10.48550/arXiv.2404.05933}.

\bibitem[Liu et~al.(1992)Liu, Singh, et~al.]{liu1992moving}
Regina~Y Liu, Kesar Singh, et~al.
\newblock Moving blocks jackknife and bootstrap capture weak dependence.
\newblock \emph{Exploring the limits of bootstrap}, 225:\penalty0 248, 1992.
\newblock URL
  \url{https://www.wiley.com/en-us/Exploring+the+Limits+of+Bootstrap-p-9780471536314}.

\bibitem[Liu and Bu(2024)]{liu2024adaptive}
Yepeng Liu and Yuheng Bu.
\newblock Adaptive text watermark for large language models.
\newblock \emph{arXiv preprint arXiv:2401.13927}, 2024.
\newblock URL \url{https://doi.org/10.48550/arXiv.2401.13927}.

\bibitem[Matteson and James(2014)]{matteson2014nonparametric}
David~S Matteson and Nicholas~A James.
\newblock A nonparametric approach for multiple change point analysis of
  multivariate data.
\newblock \emph{Journal of the American Statistical Association}, 109\penalty0
  (505):\penalty0 334--345, 2014.
\newblock URL \url{https://doi.org/10.1080/01621459.2013.849605}.

\bibitem[Munyer and Zhong(2023)]{munyer2023deeptextmark}
Travis Munyer and Xin Zhong.
\newblock Deeptextmark: Deep learning based text watermarking for detection of
  large language model generated text.
\newblock \emph{arXiv preprint arXiv:2305.05773}, 2023.
\newblock URL \url{https://doi.org/10.48550/arXiv.2305.05773}.

\bibitem[R.~Killick and Eckley(2012)]{doi:10.1080/01621459.2012.737745}
P.~Fearnhead R.~Killick and I.~A. Eckley.
\newblock Optimal detection of changepoints with a linear computational cost.
\newblock \emph{Journal of the American Statistical Association}, 107\penalty0
  (500):\penalty0 1590--1598, 2012.
\newblock \doi{10.1080/01621459.2012.737745}.
\newblock URL \url{https://doi.org/10.1080/01621459.2012.737745}.

\bibitem[Radford et~al.(2019)Radford, Wu, Child, Luan, Amodei, Sutskever,
  et~al.]{radford2019language}
Alec Radford, Jeffrey Wu, Rewon Child, David Luan, Dario Amodei, Ilya
  Sutskever, et~al.
\newblock Language models are unsupervised multitask learners.
\newblock \emph{OpenAI blog}, 1\penalty0 (8):\penalty0 9, 2019.
\newblock URL \url{https://openai.com/index/better-language-models/}.

\bibitem[Raffel et~al.(2020)Raffel, Shazeer, Roberts, Lee, Narang, Matena,
  Zhou, Li, and Liu]{JMLR:v21:20-074}
Colin Raffel, Noam Shazeer, Adam Roberts, Katherine Lee, Sharan Narang, Michael
  Matena, Yanqi Zhou, Wei Li, and Peter~J. Liu.
\newblock Exploring the limits of transfer learning with a unified text-to-text
  transformer.
\newblock \emph{Journal of Machine Learning Research}, 21\penalty0
  (140):\penalty0 1--67, 2020.
\newblock URL \url{http://jmlr.org/papers/v21/20-074.html}.

\bibitem[Rand(1971)]{doi:10.1080/01621459.1971.10482356}
William~M. Rand.
\newblock Objective criteria for the evaluation of clustering methods.
\newblock \emph{Journal of the American Statistical Association}, 66\penalty0
  (336):\penalty0 846--850, 1971.
\newblock \doi{10.1080/01621459.1971.10482356}.
\newblock URL
  \url{https://www.tandfonline.com/doi/abs/10.1080/01621459.1971.10482356}.

\bibitem[Resnick(2019)]{resnick2019probability}
Sidney Resnick.
\newblock \emph{A probability path}.
\newblock Springer, 2019.
\newblock URL \url{https://doi.org/10.1007/978-0-8176-8409-9}.

\bibitem[Truong et~al.(2020)Truong, Oudre, and Vayatis]{TRUONG2020107299}
Charles Truong, Laurent Oudre, and Nicolas Vayatis.
\newblock Selective review of offline change point detection methods.
\newblock \emph{Signal Processing}, 167:\penalty0 107299, 2020.
\newblock ISSN 0165-1684.
\newblock \doi{https://doi.org/10.1016/j.sigpro.2019.107299}.
\newblock URL
  \url{https://www.sciencedirect.com/science/article/pii/S0165168419303494}.

\bibitem[Vostrikova(1981)]{vostrikova1981detecting}
Lyudmila~Yur'evna Vostrikova.
\newblock Detecting “disorder” in multidimensional random processes.
\newblock In \emph{Doklady akademii nauk}, volume 259, pages 270--274. Russian
  Academy of Sciences, 1981.
\newblock URL \url{https://www.mathnet.ru/eng/dan44582}.

\bibitem[Wu et~al.(2023)Wu, Hu, Zhang, and Huang]{wu2023dipmark}
Yihan Wu, Zhengmian Hu, Hongyang Zhang, and Heng Huang.
\newblock Dipmark: A stealthy, efficient and resilient watermark for large
  language models.
\newblock \emph{arXiv preprint arXiv:2310.07710}, 2023.
\newblock URL \url{https://doi.org/10.48550/arXiv.2310.07710}.

\bibitem[Zhang et~al.(2022)Zhang, Roller, Goyal, Artetxe, Chen, Chen, Dewan,
  Diab, Li, Lin, Mihaylov, Ott, Shleifer, Shuster, Simig, Koura, Sridhar, Wang,
  and Zettlemoyer]{zhang2022opt}
Susan Zhang, Stephen Roller, Naman Goyal, Mikel Artetxe, Moya Chen, Shuohui
  Chen, Christopher Dewan, Mona Diab, Xian Li, Xi~Victoria Lin, Todor Mihaylov,
  Myle Ott, Sam Shleifer, Kurt Shuster, Daniel Simig, Punit~Singh Koura, Anjali
  Sridhar, Tianlu Wang, and Luke Zettlemoyer.
\newblock Opt: Open pre-trained transformer language models, 2022.
\newblock URL \url{https://doi.org/10.48550/arXiv.2205.01068}.

\bibitem[Zhang and Dawn(2023)]{zhang2023sequential}
Xianyang Zhang and Trisha Dawn.
\newblock Sequential gradient descent and quasi-newton’s method for
  change-point analysis.
\newblock In \emph{International Conference on Artificial Intelligence and
  Statistics}, pages 1129--1143. PMLR, 2023.

\bibitem[Zhao et~al.(2023)Zhao, Wang, and Li]{zhao2023protecting}
Xuandong Zhao, Yu-Xiang Wang, and Lei Li.
\newblock Protecting language generation models via invisible watermarking.
\newblock In \emph{International Conference on Machine Learning}, pages
  42187--42199. PMLR, 2023.
\newblock URL \url{https://doi.org/10.48550/arXiv.2302.03162}.

\bibitem[Zhao et~al.(2024)Zhao, Li, and Wang]{zhao2024permute}
Xuandong Zhao, Lei Li, and Yu-Xiang Wang.
\newblock Permute-and-flip: An optimally robust and watermarkable decoder for
  llms.
\newblock \emph{arXiv preprint arXiv:2402.05864}, 2024.
\newblock URL \url{https://doi.org/10.48550/arXiv.2402.05864}.

\end{thebibliography}


\newpage
\appendix
\noindent\textbf{\LARGE Appendix}
\renewcommand\thesection{\Alph{section}}
\numberwithin{equation}{section}
\numberwithin{table}{section}
\numberwithin{figure}{section}

\section{Proofs of the main results}
\begin{proof}[Proof of Theorem \ref{thm1}]
~\\
(i) For simplicity of notation, denote $\varphi \coloneqq \phi(\xi_{1:n}, \y_{1:m})$ and $\varphi^{(t)} \coloneqq \phi(\xi_{1:n}^{(t)}, \y_{1:m})$ for $t = 1, \cdots, T$. Under the null hypothesis, we know that $\xi_{1:n}$ is independent of $\y_{1:m}$, which implies that the pairs $(\xi_{1:n}, \y_{1:m}),(\xi_{1:n}^{(1)}, \y_{1:m}),\dots,(\xi_{1:n}^{(T)}, \y_{1:m})$ follow the same distribution. Hence $\varphi, \varphi^{(1)},\dots, \varphi^{(T)}$ are exchangeable. The exchangeability ensures that the rank of $\varphi$ relative to $\{\varphi,\varphi^{(1)},\dots, \varphi^{(t)}\}$ is uniformly distributed. Denote the order statistics $\varphi_{(1)} \leq \cdots \leq \varphi_{(T + 1)}$. Then we have
\[P\left(\varphi = \varphi_{(j)}\right) = \frac{1}{T + 1},\qquad j = 1, \dots, T + 1.\]
Hence, for $j = 1, \dots, T + 1$, we have
\[P\left(p_T \leq \frac{j}{T + 1}\right) = P\left(\varphi \in \left\{\varphi_{(T+2-j)},\dots, \varphi_{(T+1)}\right\}\right) = \frac{j}{T + 1},\]
Then, we have
$P\left(p_T \leq \alpha \right) =  \lfloor (T+1)\alpha \rfloor/ (T + 1)\leq \alpha$.\\
(ii) Denote $\sE_{\xi'} \coloneqq \bbE[\phi(\xi'_{1:n}, \y_{1:m}\lvert \cF_m)]$. By Chebyshev's inequality and Condition (a), we get
\[P\left(|\phi(\xi'_{1:n}, \y_{1:m}) - \sE_{\xi'}| \geq \epsilon / \sqrt{n} \lvert \cF_m\right) \leq \frac{\Var(\phi(\xi'_{1:n}, \y_{1:m})\lvert \cF_m)}{\epsilon^2 / n} \leq \frac{C_v}{\epsilon^2},\]
for all $\epsilon \geq 0$. Thus $\phi(\xi'_{1:n}, \y_{1:m}) - \sE_{\xi'}=O_p(n^{-1/2})$, which together with Condition (b) implies that $\phi(\xi'_{1:n}, \y_{1:m})=O_p(n^{-1/2})$ given $\cF_m$.

Denote the distribution of $\phi(\xi_{1:n}', \y_{1:m})$ conditional on $\cF_m$ by $F$, and the empirical distribution of $\{\varphi^{(t)}\}_{t=0}^T$ by $F_T$, where we set $\varphi^{(0)}=\varphi$.
Let $q_{1-\alpha,T}=\varphi_{(T+2-j_\alpha)}$ with $j_\alpha=\lfloor (T+1)\alpha\rfloor$. Note that $F_T(q_{1-\alpha,T})=1-(j_\alpha-1)/(T+1)$. Our test rejects the null whenever $p_T\leq \alpha$, which is equivalent to rejecting the null if $\varphi\geq \varphi_{(T+2-j_\alpha)}$.
By the Dvoretzky-Kiefer-Wolfowitz inequality, we have
\begin{align*}
&P(|F_T(q_{1-\alpha,T})-F(q_{1-\alpha,T})|>\epsilon\lvert \cF_m)
\leq P(\sup_x|F_T(x)-F(x)|>\epsilon\lvert \cF_m)\leq C_1\exp(-2T\epsilon^2)
\end{align*}
for some constant $C_1>0$, which implies that with probability greater than $1-C_1\exp(-2T\epsilon^2)$,
$F(q_{1-\alpha,T})<1-(j_\alpha-1)/(T+1) + 2\epsilon$.
Define $F^{-1}(t)=\inf\{s:F(s)\geq t\}$ and the event $\mathcal{A}_T=\{q_{1-\alpha,T}< F^{-1}(1-(j_\alpha-1)/(T+1) +2\epsilon)\}$. Then we have $P(\mathcal{A}_T|\cF_m)\geq 1-C_1\exp(-2T\epsilon^2).$ In addition, as $\phi(\xi'_{1:n}, \y_{1:m})=O_p(n^{-1/2})$, we have $F^{-1}(s)=O(n^{-1/2})$ for any $s<1$.
By Condition (a), we have $\sqrt{n}\bigl(\phi(\xi_{1:n},\y_{1:m})-\bbE[\phi(\xi_{1:n},\y_{1:m}) \lvert
\cF_m]\bigr) = O_p(1)$. Hence, for $T>2/\epsilon-1$,
\begin{align*}
&P(\phi(\xi_{1:n}, \y_{1:m})\geq q_{1 - \alpha, T}\lvert \cF_m)
\\ \geq &P\left(\sqrt{n}\bigl(\phi(\xi_{1:n},\y_{1:m})-\bbE[\phi(\xi_{1:n},\y_{1:m}) \lvert
\cF_m]\bigr) + \sqrt{n}\bbE[\phi(\xi_{1:n},\y_{1:m})| \cF_m]\geq \sqrt{n}q_{1 - \alpha, T}, \mathcal{A}_T\lvert \cF_m\right)
\\ \geq &P\Big(\sqrt{n}\bigl(\phi(\xi_{1:n},\y_{1:m})-\bbE[\phi(\xi_{1:n},\y_{1:m}) \lvert
\cF_m ]\bigr) + \sqrt{n}\bbE[\phi(\xi_{1:n},\y_{1:m})| \cF_m]
\\ &\geq \sqrt{n}F^{-1}(1-(j_\alpha-1)/(T+1) +2\epsilon),\mathcal{A}_T \lvert \cF_m\Big)
\\ \geq &P\Big(\sqrt{n}\bigl(\phi(\xi_{1:n},\y_{1:m})-\bbE[\phi(\xi_{1:n},\y_{1:m}) \lvert
\cF_m]\bigr) + \sqrt{n}\bbE[\phi(\xi_{1:n},\y_{1:m})|\cF_m]
\\ &\geq \sqrt{n}F^{-1}(1-\alpha +3\epsilon),\mathcal{A}_T \lvert \cF_m\Big)
\\ \geq & 1-C_1\exp(-2T\epsilon^2) + o(1),
\end{align*}
where we have used Condition (c) and the fact that $\sqrt{n}F^{-1}(1-\alpha +3\epsilon)=O(1)$ to get the convergence.
\end{proof}

We state the following Lemma, which is useful for the proof of Corollary \ref{cor1}.

\begin{lemma}\label{lemma1}
\rm{$\{\xi_i\}_{i=1}^n$ are conditionally independent given $y_{-n_0:n}$.}
\end{lemma}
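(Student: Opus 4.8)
The plan is to unwind the definitions from the watermarked generation scheme and show that the watermark keys $\xi_1,\dots,\xi_n$ enter the joint law only as the innovations driving an autoregressive recursion, so that conditioning on the realized token string $y_{-n_0:n}$ leaves them independent. Recall that the generation algorithm sets $y_i=\Gamma(\xi_i,p(\cdot|y_{-n_0:i-1}))$ with the $\xi_i$ drawn i.i.d.\ from $\nu$ and independent of the prompt $y_{-n_0:0}$. First I would write the joint density (or law, if $\Xi$ is not Euclidean) of $(\xi_{1:n},y_{1:n})$ given the prompt as a telescoping product: by the chain rule and the recursive structure, $p(\xi_{1:n},y_{1:n}\mid y_{-n_0:0}) = \prod_{i=1}^n \nu(\xi_i)\,\ind\{y_i=\Gamma(\xi_i,p(\cdot|y_{-n_0:i-1}))\}$, since each $\xi_i$ is independent of everything generated before it and $y_i$ is a deterministic function of $\xi_i$ and the already-revealed history $y_{-n_0:i-1}$.

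The key observation is that the $i$-th factor $\nu(\xi_i)\,\ind\{y_i=\Gamma(\xi_i,p(\cdot|y_{-n_0:i-1}))\}$ depends on the $y$'s only through $y_{-n_0:i-1}$ and $y_i$, and depends on $\xi_i$ alone among the keys. Hence, after conditioning on the full token trajectory $y_{-n_0:n}$, the conditional law of $\xi_{1:n}$ factorizes as $\prod_{i=1}^n g_i(\xi_i)$ where $g_i(\xi_i) \propto \nu(\xi_i)\,\ind\{y_i=\Gamma(\xi_i,p(\cdot|y_{-n_0:i-1}))\}$ is a function of $\xi_i$ only (the conditioning values $y_{-n_0:i-1},y_i$ being fixed). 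A product form for the conditional density is exactly the statement of conditional independence. I would phrase this cleanly either by Bayes' rule — dividing the joint by the marginal of $y_{-n_0:n}$, which itself factorizes over $i$ — or, more robustly in the general-space setting, by checking that for any bounded test functions $f_1,\dots,f_n$ the conditional expectation $\bbE[\prod_i f_i(\xi_i)\mid y_{-n_0:n}]$ equals $\prod_i \bbE[f_i(\xi_i)\mid y_{-n_0:n}]$, using the tower property and the fact that $\xi_i$ is conditionally independent of $(\xi_{1:i-1},\xi_{i+1:n})$ given the relevant history at each stage.

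The main obstacle, such as it is, is purely a matter of rigor rather than difficulty: the decoder $\Gamma$ and the key space $\Xi$ are quite general (in Example~\ref{ex2}, $\xi_i=(\pi_i,u_i)$ mixes a discrete permutation with a continuous uniform), so I should avoid writing naive densities and instead argue at the level of conditional laws / Markov kernels, being careful that the event $\{y_i=\Gamma(\xi_i,p(\cdot|y_{-n_0:i-1}))\}$ may have probability-density-like structure that needs the disintegration theorem to handle cleanly. A safe route is induction on $n$: assume $\xi_{1:n-1}$ are conditionally independent given $y_{-n_0:n-1}$, note that $\xi_n$ is independent of $(\xi_{1:n-1},y_{-n_0:n-1})$ and that $y_n=\Gamma(\xi_n,p(\cdot|y_{-n_0:n-1}))$ is a measurable function of $\xi_n$ given $y_{-n_0:n-1}$, and then verify that conditioning further on $y_n$ preserves the independence of $\xi_{1:n-1}$ among themselves (since they are conditionally independent of $\xi_n$, hence of $y_n$, given $y_{-n_0:n-1}$) while pinning down the conditional law of $\xi_n$. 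This inductive bookkeeping is the only place where care is needed; the conceptual content is just "i.i.d.\ innovations of an autoregression stay independent when you condition on the output path."
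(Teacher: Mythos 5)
Your proposal reproduces the paper's argument exactly: factor the joint law of $(\xi_{1:n},y_{1:n})$ given the prompt as $\prod_{i=1}^n \nu(\xi_i)\,\ind\{y_i=\Gamma(\xi_i,p(\cdot|y_{-n_0:i-1}))\}$, observe that each factor involves only $\xi_i$ among the keys, and conclude that the conditional law of $\xi_{1:n}$ given $y_{-n_0:n}$ factorizes over $i$. Your added remarks on measure-theoretic rigor for the mixed discrete--continuous key space and the alternative inductive bookkeeping are sound but go beyond what the paper spells out; the core route is the same.
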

\begin{proof}[Proof of Lemma \ref{lemma1}]
Recall that
$y_i = \Gamma(\xi_i, p(\cdot|y_{-n_0:i-1}))=\Gamma_i(\xi_i)$, where $\Gamma_i$ depends on $y_{-n_0:i-1}$. We note that
\begin{align*}
p(\xi_{1:n},y_{1:n}|y_{-n_0:0})=&p(\xi_1,y_1|y_{-n_0:0})\prod^{n}_{i=2}p(\xi_i,y_i|\xi_{1:i-1},y_{-n_0:i-1})
=\prod^{n}_{i=1}\mathbf{1}\{\Gamma_i(\xi_i)=y_i\}p(\xi_i).
\end{align*}
Hence the conditional distribution of $\xi_{1:n}$ given $y_{-n_0:n}$ is equal to
\begin{align}
p(\xi_{1:n}|y_{-n_0:n})=\prod^{n}_{i=1}
\frac{\mathbf{1}\{
\Gamma_i(u)=y_i\}d\nu(u)}{\int_{\Gamma_i(u)=y_i}d\nu(u)},
\end{align}
which implies that $\{\xi_i\}_{i=1}^n$ are conditionally independent given $y_{-n_0:n}$.
\end{proof}

\begin{proof}[Proof of Corollary \ref{cor1}]
~\\
(i) Recall in Example~\ref{ex2} that the test statistic is given by:
\[\phi(\xi_{1:n}, y_{1:n}) = \frac{1}{n}\sum_{i=1}^n(u_i - 1 / 2)\left(\frac{\pi_i(y_i) - 1}{V - 1} - \frac{1}{2}\right) \coloneqq \frac{1}{n}\sum_{i = 1}^n h_i(\xi_i, y_i),\]
where $\xi_i = (u_i, \pi_i)$. We first note that $h_i(\xi_i, y_i)$ is bounded and thus Condition (a) of Theorem~\ref{thm1} holds. Since $\xi_{1:n}'$ is independent of $y_{-n_0:n}$, we have
$u_i'|y_{1:n}\sim\text{Unif}[0, 1]$. Thus, $\bbE[\phi(\xi'_{1:n}, y_{1:n})\lvert y_{1:n}] = 0$, which implies that Condition (b) of Theorem~\ref{thm1} is fulfilled.

Denote $\mu_i(\cdot) = p(\cdot \lvert y_{-n_0:i-1})$. Conditional on $y_{-n_0:i}$ and $\pi_i(y_i)$, we know that $u_i$ follows the uniform distribution over the interval
$[\mu_i(y: \pi_i(y) < \pi_i(y_i)), \mu_i(y: \pi_i(y) \leq \pi_i(y_i))]$. As a result, we can calculate the expected value of $u_i$ given $y_{-n_0:i}$ and $\pi_i(y_i)$ as
\begin{align*}
\bbE[u_i\lvert y_{-n_0:i}, \pi_i(y_i)] &= \frac{1}{2}\left\{\mu_i(y:\pi_i(y) < \pi_i(y_i)) + \mu_i(y:\pi_i(y) \leq \pi_i(y_i))\right\}\\
&= \frac{\mu_i(y_i)}{2} + \mu_i(y:\pi_i(y) < \pi_i(y_i))\\
&= \frac{\mu_i(y_i)}{2} + \frac{\pi_i(y_i) - 1}{V - 1}(1 - \mu_i(y_i))\\
&= \frac{1}{2} + (1 - \mu_i(y_i))\left(\frac{\pi_i(y_i) - 1}{V - 1} - \frac{1}{2}\right),
\end{align*}
where the third equality is because given $\pi_i(y_i) = k$, $\pi_i$ follows the uniform distribution over the permutation space with the restriction $\pi_i(y_i) = k$. Then, we have
\begin{align*}
&\bbE\left[(u_i - 1/2)\left(\frac{\pi_i(y_i) - 1}{V - 1} - \frac{1}{2}\right)\lvert y_{-n_0:i}\right] \\
=& \bbE\left[\bbE\left[(u_i - 1/2)\left(\frac{\pi_i(y_i) - 1}{V - 1} - \frac{1}{2}\right)\lvert y_{-n_0:i}, \pi_i(y_i)\right]\lvert y_{-n_0:i}\right]\\
=& \bbE\left[(1 - \mu_i(y_i))\left(\frac{\pi_i(y_i) - 1}{V - 1} - \frac{1}{2}\right)^2\right]\\
=& \bigl(1 - p(y_i|y_{-n_0:i - 1})\bigr)\bbE\left[\left(\frac{\pi_i(y_i) - 1}{V - 1} - \frac{1}{2}\right)^2\lvert y_{-n_0:i}\right].
\end{align*}

Given $y_i$, $(\pi_i(y_i)-1)/(V-1)$ follows the uniform distribution over the discrete space $\{0, 1 / (V - 1), \dots, 1\}$. Thus we have $\bbE[(\pi_i(y_i) - 1) / (V - 1)] = 1 / 2$ and $\Var((\pi_i(y_i) - 1) / (V - 1)) = C$, which is a constant less than $1 / 12$ (i.e., the variance of $\text{Unif}[0, 1]$). Hence,
$\bbE[\phi(\xi_{1:n}, y_{1:n})] = C n^{-1}\sum_{i=1}^n\bigl(1 - p(y_i\lvert y_{1:i - 1})\bigr)$ and Condition (c) of Theorem~\ref{thm1} is satisfied due to \eqref{eq:cond-examp}.

(ii) In Example~\ref{ex3}, the test statistic is given by
\[\phi(\xi_{1:n}, y_{1:n}) = \frac{1}{n}\sum_{i=1}^n\left\{\log(\xi_{i, y_i}) + 1\right\}.\]
Observe that $E_{ik} \coloneqq -\log(\xi_{ik}) / p(k|y_{-n_0:i-1})\sim\text{Exp}(p(k|y_{-n_0:i-1}))$. Since $\xi'_{1:n}$ is independent of $y_{1:n}$,  we have
$-\log(\xi'_{i, y_i})\lvert y_{-n_0:n}\sim\text{Exp}(1)$. Hence, conditional on $y_{-n_0:n}$, we have
\[\bbE[\phi(\xi'_{1:n}, y_{1:n})|y_{-n_0:n}] = 0, \quad \Var(\phi(\xi'_{1:n}, y_{1:n})|y_{-n_0:n}) = \frac{1}{n}.\]
Condition (b) of Theorem~\ref{thm1} is satisfied.

Given $y_{-n_0:n}$, we know $E_{i, y_i} = \min_{1\leq k \leq V}E_{ik}$, which implies $-\log(\xi_{i, y_i}) / p(y_i|y_{-n_0:i-1}) \lvert y_{-n_0:n} \sim \text{Exp}(1)$. It is worth noting that
\begin{align*}
P(-\log(\xi_{i, y_i}) \geq t) = p\left(-\frac{\log(\xi_{i, y_i})}{p(y_i|y_{-n_0:i-1})} \geq \frac{t}{p(y_i|y_{-n_0:i-1})}\right) =\exp\left(-\frac{t}{p(y_i|y_{-n_0:i-1})}\right).
\end{align*}
That is  $-\log(\xi_{i, y_i})\lvert y_{-n_0:n}\sim\text{Exp}(1/p(y_i|y_{-n_0:i-1}))$. According to Lemma~\ref{lemma1}, $\xi_i$s are conditionally independent given $y_{-n_0:n}$. Thus, we have
\begin{align*}
&\bbE[\phi(\xi_{1:n}, y_{1:n})|y_{-n_0:n}] = \frac{1}{n}\sum_{i=1}^n \left(1 - p(y_i|y_{-n_0:i-1}) \right),\\
&\Var(\phi(\xi_{1:n}, y_{1:n})|y_{-n_0:n}) = \frac{1}{n^2}\sum_{i=1}^n p(y_i|y_{-n_0:i-1})^2 \leq \frac{1}{n}.
\end{align*}
Therefore, Conditions (a) and (c) of Theorem~\ref{thm1} are satisfied.
\end{proof}

\begin{proof}[Proof of Theorem \ref{thm-max}]
We only prove the result under the setting of Example \ref{ex2} as the proof for Example \ref{ex3} is similar with the help of Bernstein's inequality. Because $|h_i|\leq 1/4$, by Hoeffding's inequality, we have
\begin{align*}
&P\left(|\mathcal{M}(\xi_{a:a+B-1},\y_{b:b+B-1})-\mathbb{E}[\mathcal{M}(\xi_{a:a+B-1},\y_{b:b+B-1})|\y_{1:m},y_{-n_0:n}]|>t|\y_{1:m},y_{-n_0:n}\right)
\\ \leq&  2\exp\left(-8Bt^2\right).
\end{align*}
By the union bound, we have
\begin{align*}
&P\Bigg(\max_{1\leq a\leq n-B+1,1\leq b\leq m-B+1}|\mathcal{M}(\xi_{a:a+B-1},\y_{b:b+B-1})
\\&-\mathbb{E}[\mathcal{M}(\xi_{a:a+B-1},\y_{b:b+B-1})|\y_{1:m},y_{-n_0:n}]|
>t|\y_{1:m},y_{-n_0:n}\Bigg)
\\ \leq&  2(n-B+1)(m-B+1)\exp\left(-8Bt^2\right).
\end{align*}
Integrating out the strings in $y_{1:n}$ that are not contained in $\y_{1:m}$, we obtain
\begin{align*}
&P\Bigg(\max_{1\leq a\leq n-B+1,1\leq b\leq m-B+1}|\mathcal{M}(\xi_{a:a+B-1},\y_{b:b+B-1})
\\&-\mathbb{E}[\mathcal{M}(\xi_{a:a+B-1},\y_{b:b+B-1})|\y_{1:m},y_{-n_0:n}]|>t|\cF_m\Bigg)
\\ \leq&  2(n-B+1)(m-B+1)\exp\left(-8Bt^2\right),
\end{align*}
where $\cF_m = [\y_{1:m},y_{-n_0:0}]$. Thus, conditional on $\cF_m$, $\max_{a,b}\{|\mathbb{E}[\mathcal{M}(\xi_{a:a+B-1},\y_{b:b+B-1})| \cF_m]-\mathcal{M}(\xi_{a:a+B-1},\y_{b:b+B-1})|\}=O(C_{N,B})$. Note that
\begin{align*}
\phi(\xi_{1:n},\y_{1:m})=&\max_{a,b}\mathcal{M}(\xi_{a:a+B-1},\y_{b:b+B-1})
\\ \geq& \max_{a,b}\mathbb{E}[\mathcal{M}(\xi_{a:a+B-1},\y_{b:b+B-1})|\cF_m]
\\&- \max_{a,b}\{\mathbb{E}[\mathcal{M}(\xi_{a:a+B-1},\y_{b:b+B-1})|\cF_m]-\mathcal{M}(\xi_{a:a+B-1},\y_{b:b+B-1})\}
\\ =& \max_{a,b}\mathbb{E}[\mathcal{M}(\xi_{a:a+B-1},\y_{b:b+B-1})|\cF_m]
+O(C_{N,B}).
\end{align*}
On the other hand, for a randomly generated key $\xi'_{1:n}$, $\mathbb{E}[\mathcal{M}(\xi_{a:a+B-1}',\y_{b:b+B-1})|\cF_m]=0$ for all $a,b$. By the same argument, we get
\begin{align*}
P\left(\phi(\xi_{1:n}',\y_{1:m})>t|\cF_m\right)=&P\left(\max_{1\leq a\leq n-B+1,1\leq b\leq m-B+1}\mathcal{M}(\xi_{a:a+B-1}',\y_{b:b+B-1})>t\Big|\cF_m\right)
\\ \leq&  2(n-B+1)(m-B+1)\exp\left(-8Bt^2\right),
\end{align*}
which suggests that $F^{-1}(s)=O(C_{N,B})$ with $F$ being
the distribution of $\phi(\xi_{1:n}', \y_{1:m})$ conditional on $\cF_m$ and $F^{-1}(t)=\inf\{s:F(s)\geq t\}$. The rest of the arguments are similar to those in the proof of Theorem \ref{thm1}. We skip the details.
\end{proof}

\begin{proof}[Proof of Proposition \ref{prop-1}]
Note that conditional on $\mathcal{F}_m,$ the $p$-value sequence is $B$-dependent in the sense that $p_i$ and $p_j$ are independent only if $|i-j|>B$. Under the assumption on $B$, we have
\begin{align*}
\text{Var}(F_{a+1:b}(t)|\mathcal{F}_m)=O\left(\frac{B}{|b-a|}\right)=o(1)
\end{align*}
for $|b-a|\asymp m,$ which implies that $F_{a+1:b}(t)-\mathbb{E}[F_{a+1:b}(t)]\rightarrow^p 0$ for any given $t\in [0,1]$. Using similar arguments as in the proof of Theorem 7.5.2 of \cite{resnick2019probability}, we can strengthen the result to allow uniform convergence over $t\in[0,1]$:
\begin{align*}
\sup_{t\in[0,1]}|F_{a+1:b}(t)-\mathbb{E}[F_{a+1:b}(t)]|\rightarrow^p 0.
\end{align*}
Therefore, we obtain
\begin{align*}
&\max_{1\leq \tau<m} S_{1:m}(\tau) \\ \geq& S_{1:m}(\tau^*)
\\=&\sqrt{m}\frac{\tau^*(m-\tau^*)}{m^{2}}\sup_{t\in[0,1]}|F_{1:\tau^*}(t)-t-(F_{\tau^*+1:m}(t)-\mathbb{E}[F_{\tau^*+1:m}(t)])+t-\mathbb{E}[F_{\tau^*+1:m}(t)]|
\\ \geq & \sqrt{m}\frac{\tau^*(m-\tau^*)}{m^{2}}\Bigg\{\sup_{t\in[0,1]}|t-\mathbb{E}[F_{\tau^*+1:m}(t)]|-\sup_{t\in[0,1]}|F_{1:\tau^*}(t)-t|
\\&-\sup_{t\in[0,1]}|F_{\tau^*+1:m}(t)
-\mathbb{E}[F_{\tau^*+1:m}(t)]|\Bigg\}
\\ = &\sqrt{m}\gamma^*(1-\gamma^*)\{D(F_0,\mathbb{E}[F_{\tau^*+1:m}(t)]) + o_p(1)\}(1+o(1))\rightarrow +\infty.
\end{align*}
\end{proof}

\begin{proof}[Proof of Theorem \ref{thm-cp-con}]
We begin the proof by noting that
\begin{align*}
\frac{\tau(m-\tau)}{m^{3/2}}(F_{1:\tau}(t)-F_{\tau+1:m}(t))
=\frac{1}{\sqrt{m}}\sum^{\tau}_{i=1}(\mathbf{1}\{p_i\leq t\}-F_{1:m}(t)).
\end{align*}
Let us focus on the case where $\hat{\tau}\leq \tau^*$. The other case where $\hat{\tau}>\tau^*$ can be proved in a similar way. By the definition of $\hat{\tau}$, we have
\begin{align*}
0\leq  \frac{1}{m}\sup_t\left|\sum^{\hat{\tau}}_{i=1}(\mathbf{1}\{p_i\leq t\}-F_{1:m}(t))\right| -\frac{1}{m}\sup_t\left|\sum^{\tau^*}_{i=1}(\mathbf{1}\{p_i\leq t\}-F_{1:m}(t))\right|:=I(\hat{\tau})-I(\tau^*).
\end{align*}
For any $1 \leq a < b \leq m$ and there is no change point between $[a, b]$, define $W_{a:b}(t) = \sum_{i=a}^b\mathbf{1}\{p_i \leq t\}$ and $\bar W_{a:b}(t) = \sum_{i=a}^b\{\mathbf{1}\{p_i\leq t\} - \bbE[F_{a:b}(t)]\}$. For any $\tau\leq \tau^*$, we have
\begin{align*}
I(\tau)=&\frac{1}{m}\sup_t \left|W_{1:\tau}(t)-\frac{\tau}{m}(W_{1:\tau^*}(t)+W_{\tau^*+1:m}(t))\right|
\\=&\frac{1}{m}\sup_t \left|\tau F_0(t)-\frac{\tau}{m}\{\tau^*F_0(t)+(m-\tau^*)\mathbb{E}[F_{\tau^*+1:m}(t)]\}\right| + R(\tau)
\\=&\frac{(m-\tau^*)\tau}{m^2}D(F_0,\mathbb{E}[F_{\tau^*+1:m}(t)]) + R(\tau),
\end{align*}
where $R(\tau)$ is a reminder term satisfying that
\begin{align*}
|R(\tau)|\leq \frac{1}{m}\sup_t \left|\bar{W}_{1:\tau}(t)-\frac{\tau}{m}\bigl(\bar{W}_{1:\tau^*}(t)+\bar{W}_{\tau^*+1:m}(t)\bigr)\right|.
\end{align*}
Hence, we have
\begin{align*}
0 \leq&   I(\hat{\tau})-I(\tau^*)
\\ =& \frac{(m-\tau^*)(\hat{\tau}-\tau^*)}{m^2}D(F_0,\mathbb{E}[F_{\tau^*+1:m}(t)]) + R(\hat{\tau}) - R(\tau^*)
\\ \leq & \frac{(m-\tau^*)(\hat{\tau}-\tau^*)}{m^2}D(F_0,\mathbb{E}[F_{\tau^*+1:m}(t)]) + 2\sup_{\tau\leq \tau^*}|R(\tau)|,
\end{align*}
which implies that
\begin{align*}
\tau^*-\hat{\tau}\leq \frac{2m^2}{(m-\tau^*)D(F_0,\mathbb{E}[F_{\tau^*+1:m}(t)])}\sup_{\tau\leq \tau^*}|R(\tau)|.
\end{align*}
Notice that
\begin{align*}
\sup_\tau |R(\tau)|\leq &\frac{1}{m}\sup_{\tau\leq \tau^*}\sup_{t\in [0,1]} \left|\bar{W}_{1:\tau}(t)-\frac{\tau}{m}\{\bar{W}_{1:\tau^*}(t)+\bar{W}_{\tau^*+1:m}(t)\}\right|   \\ \leq &
\frac{1}{m}\sup_{\tau\leq \tau^*}\sup_{t\in [0,1]} \left|\bar{W}_{1:\tau}(t)\right|+\frac{1}{m}\sup_{t\in [0,1]}\left|\bar{W}_{1:\tau^*}(t)+\bar{W}_{\tau^*+1:m}(t)\right|
\\ \leq & \frac{1}{m}\sup_{\tau\leq m}\sup_{t\in [0,1]} \left|\bar{W}_{1:\tau}(t)\right|+O_p(m^{-1/2}).
\end{align*}

It remains to analyze $\sup_{\tau\leq m}\sup_{t\in [0,1]} \left|\bar{W}_{1:\tau}(t)\right|$. We first present a result that slightly modifies the Ottaviani's inequality. For the ease of notation, we write
$\|\bar{W}_{1:\tau}\|=\sup_{t\in [0,1]} \left|\bar{W}_{1:\tau}(t)\right|$. For any $u>0$ and $v>B$, we have
\begin{align}\label{eq-max}
P\left(\max_{\tau\leq m}\|\bar{W}_{1:\tau}\|>u+v\right)\leq \frac{P(\|\bar{W}_{1:m}\|>v-B)}{1-\max_{\tau}P(\|\bar{W}_{\tau+1:m}\|>u)}.
\end{align}
To see this, let $A_k$ be the event that $\|\bar{W}_{1:k}\|$ is the first $\|\bar{W}_{1:j}\|$ (for $j=1,2,\dots,m$) that is strictly greater than $u+v$. The event on the LHS is the disjoint union of $A_1,\dots,A_m.$ Note that $\|\bar{W}_{k+1+B:m}\|$ is independent of $\|\bar{W}_1\|,\dots,\|\bar{W}_k\|$ (and hence is independent of $A_k$).
We have
\begin{align*}
P(A_k)\min_{0\leq \tau<m }P(\|\bar{W}_{\tau+1:m}\|\leq u)
\leq & P(A_k,\|\bar{W}_{k+1+B:m}\|\leq u)
\\ \leq& P(A_k,\|\bar{W}_{k+1:m}\|\leq u + B)
\\ \leq& P(A_k,\|\bar{W}_{1:m}\|> v - B),
\end{align*}
where we have usede the fact $\|\bar{W}_{k+1:m}\|\leq \|\bar{W}_{k+1:k+B}\|+\|\bar{W}_{k+B+1:m}\|\leq u+B$. Summing over $k$ leads to
$$P\left(\max_{\tau\leq m}\|\bar{W}_{1:\tau}\|>u+v\right)\min_{0\leq \tau<m }P(\|\bar{W}_{\tau+1:m}\|\leq u)\leq P(\|\bar{W}_{1:m}\|> v - B),$$
which gives the desired result.

Next, we study $P(\|\bar{W}_{\tau+1:m}\|>u)$ for any $\tau=0,\dots,m-1$. Note that $P(\|\bar{W}_{\tau+1:m}\|>u)\leq P(\|\bar{W}_{\tau+1:\tau^*}\|>u/2)+P(\|\bar{W}_{\tau^*+1:m}\|>u/2)$. Thus, without loss of generality, let us focus our analysis on the probability $P(\|\bar{W}_{a+1:b}\|>u)$, where the corresponding segment does not contain a change point. Assume $b-a=2KB$. (Notice that for general $a < b$, we have an additional interval whose length is smaller than $B$. Hence, a similar analysis below can be used.) We divide the index set $\{a+1,a+2,\dots,b\}$ into $2K$ consecutive blocks, denoted by $J_1,\dots,J_{2K}$, with equal block size $B$. Then we have
\begin{align*}
P(\|\bar{W}_{a+1:b}\|>u)\leq P\left(\left\|\sum^{K}_{i=1}\bar{F}_{J_{2i-1}}\right\|>u/(2B)\right) + P\left(\left\|\sum^{K}_{i=1}\bar{F}_{J_{2i}}\right\|>u/(2B)\right)
\end{align*}
with $\bar{F}_{J_i}=\bar{W}_{J_i}/B$, where $\sum^{K}_{i=1}\bar{F}_{J_{2i-1}}$ and $\sum^{K}_{i=1}\bar{F}_{J_{2i}}$ are both sums of independent bounded random variables. Let us analyze the second term on the RHS. Define
\begin{align*}
G(t)&=\sum^{K}_{i=1}\sum_{j\in J_{2i}}P(p_j\leq t)/(KB),\\
G_m(t)&=\sum^{K}_{i=1}\sum_{j\in J_{2i}}\mathbf{1}\{p_j\leq t\}/(KB).
\end{align*}
Let $t_{v,L}=G^{-1}(v/L)$ for $v=1,2,\dots,L$. Following the argument in Theorem 7.5.2 of \cite{resnick2019probability}, we can show that
\begin{align*}
\frac{1}{K}\left\|\sum^{K}_{i=1}\bar{F}_{J_{2i}}\right\|=\|G_m-G\|\leq \max_{1\leq v\leq L} |G_m(t_{v,L})-G(t_{v,L})|\vee |G_m(t_{v,L}-)-G(t_{v,L}-)| + \frac{1}{L},
\end{align*}
where $G_m(t-)=\sum^{K}_{i=1}\sum_{i\in J_{2i}}\mathbf{1}\{p_i< t\}/(KB)$ and $G(t-)$ is defined similarly. Thus we have
\begin{align*}
& P\left(\left\|\sum^{K}_{i=1}\bar{F}_{J_{2i}}\right\|>u/(2B)\right)
\\ \leq & P\left(\max_{1\leq v\leq L} |G_m(t_{v,L})-G(t_{v,L})|\vee |G_m(t_{v,L}-)-G(t_{v,L}-)| + 1/L>u/(b-a)\right)
\\ \leq & \sum_{v=1}^L P\left(|G_m(t_{v,L})-G(t_{v,L})|\vee |G_m(t_{v,L}-)-G(t_{v,L}-)| >u/(b-a)-1/L \right)
\\ \leq & C_1L\exp\left(-C_2K(u/(b-a)-1/L)^2\right)
\end{align*}
where the third inequality follows from Hoeffding's inequality. For any $\epsilon>0$,
we can set $u=C_3(b-a)\sqrt{\log(K)}/\sqrt{K}=2C_3B\sqrt{K\log(K)}$ and $L=\sqrt{K}$ for some large enough $C_3$ such that
\begin{align*}
P\left(\left\|\sum^{K}_{i=1}\bar{F}_{J_{2i}}\right\|>u/(2B)\right) \leq \epsilon.
\end{align*}

Now back to (\ref{eq-max}), set
$u=C_4 \sqrt{mB\log(m/B)}$ and $v=C_5 \sqrt{mB\log(m/B)}$.
We can make the RHS of (\ref{eq-max}) arbitrarily small with large enough $C_4$ and $C_5$. It thus gives
$\max_{\tau\leq m}\|\bar{S}_{1:\tau}\|=O(\sqrt{mB\log(m/B)})$
and
$\sup_\tau |R(\tau)|=O\left(\sqrt{B\log(m/B)/m}\right).$ Hence, we deduce that
\begin{align*}
\tau^*-\hat{\tau}\leq O_p\left(\frac{\sqrt{mB\log(m/B)}}{D(F_0,\mathbb{E}[F_{\tau^*+1:m}(t)])}\right).
\end{align*}
A similar argument applies to the other direction, which gives
\begin{align*}
|\hat{\tau}-\tau^*|= O_p\left(\frac{\sqrt{mB\log(m/B)}}{D(F_0,\mathbb{E}[F_{\tau^*+1:m}(t)])}\right).
\end{align*}
\end{proof}

\newpage
\section{Additional numerical results}
\subsection{The Levenshtein cost}\label{section:Levenshtein}

Recall that we use $\cV$ to denote the vocabulary and $\Xi$ to represent the space of watermark keys. Let $\cV^*$ be the space of strings, where $*$ can be any positive integer; for example, $\y_{1:m}\in\cV^m$. Similarly, we define $\Xi^*$ as the space of watermark key sequences. Given a string $\y$, let $\y_{a:}$ be the string from the $a$th token to the end; for example, if $\y = \y_{1:m}$, then $\y_{2:} = \y_{2:m}$. Denote the length of a string $\y$ as $\mathtt{len}(\y)$.

\begin{definition}[Simple Levenshtein cost]
Let $\gamma \in \mathbb{R}$ and base alignment cost $d_0: \cV \times \Xi \to \mathbb{R}$. Given a string $\y\in\cV^*$ and a watermark key sequence $\xi\in\Xi^*$, the simple Levenshtein cost $d_{\gamma}(\y, \xi)$ is defined by
    \begin{align}\label{eq:legenshtein}
        d_\gamma(\y, \xi):= \min \left(
        d_\gamma(\y_{2:},\xi_{2:}) + d_0(y_1,\xi_1),
        d_\gamma(\y,\xi_{2:}) + \gamma,
        d_\gamma(\y_{2:},\xi) + \gamma\right),
    \end{align}
with $d_\gamma(\y, \xi):=\gamma\cdot\mathtt{len}(\y)$ if $\xi$ is empty and $d_\gamma(\y, \xi):=\gamma\cdot\mathtt{len}(\xi)$ if $\y$ is empty.
\end{definition}

We use the following metric to quantify the dependence between the watermark key and the string:
\begin{equation}\label{eq:l-dep-mea}
\cM(\xi_{1:n}, \y_{1:m}) = - d_\gamma(\y_{1:m}, \xi_{1:n}),
\end{equation}
where $\gamma = 0.4$, and
\begin{equation}\label{eq:base-inverse}
d_0(\y_1, (u_1, \pi_1)) = \left\lvert u_1 - \frac{\pi_1(\y_1) - 1}{|\cV| - 1} \right\lvert,
\end{equation}
in inverse transform sampling method, while
\begin{equation}\label{eq:base-exp}
d_0(\y_1, \xi_1) = \log\left(1 - \xi_{1, \y_1}\right),
\end{equation}
in exponential minimum sampling method.

\subsection{Sequences of \texorpdfstring{$p$}{p}-values from OpenAI-Community/GPT2}

Figure~\ref{fig:seq-10-p} presents the $p$-values for $500$ text tokens generated from $10$ prompts. The language model used is \verb|openai-community/gpt2| obtained from \url{https://huggingface.co/openai-community/gpt2}. The $p$-value sequences are organized into four groups, namely four settings corresponding to the numerical experiments section in the main paper (Section~\ref{section:numerican-experiments}):
\begin{itemize}
\item Setting 1 (no change point): Generate 500 tokens with a watermark.
\item Setting 2 (insertion attack): Generate $250$ tokens with watermarks, then append with $250$ tokens without watermarks. In this setting, there is a single change point at the index $251$.
\item Setting 3 (substitution attack): Generate $500$ tokens with watermarks, then substitute the token with indices ranging from 201 to 300 with non-watermarked text of length 100. In this setting, there are two change points at the indices $201$ and $301$.
\item Setting 4 (insertion and substitution attacks): Generate $400$ tokens with watermarks, substitute the token with indices ranging from 101 to 200 with non-watermarked text of length 100, and then insert $100$ tokens without watermarks at the index $300$. In this setting, there are four change points located at the indices $101$, $201$, $301$, and $401$.
\end{itemize}
Within each group, the rows represent $p$-values calculated using four different distance metrics: \texttt{EMS}, \texttt{EMSL}, \texttt{ITS}, and \texttt{ITSL}. It is easy to see that \texttt{EMS} performs the best, followed by \texttt{EMSL}.

\subsection{Results for Facebook/OPT-1.3b}
\label{section:results-for-llm-facebook-opt}

Figure~\ref{fig:false-positives-pvalues-opt} shows the results for false discoveries under Setting 1, where the text tokens are all watermarked and generated with \verb|facebook/opt-1.3b|. A lower threshold leads to fewer false discoveries.

Figure~\ref{fig:rand-index-opt} shows the boxplots of the Rand index for the four methods under different settings. \texttt{EMS} demonstrates the best performance across all settings, achieving its best performance at a threshold of $\zeta = 0.005$.

\begin{figure}[p]
\centering
\includegraphics[width=\textwidth]{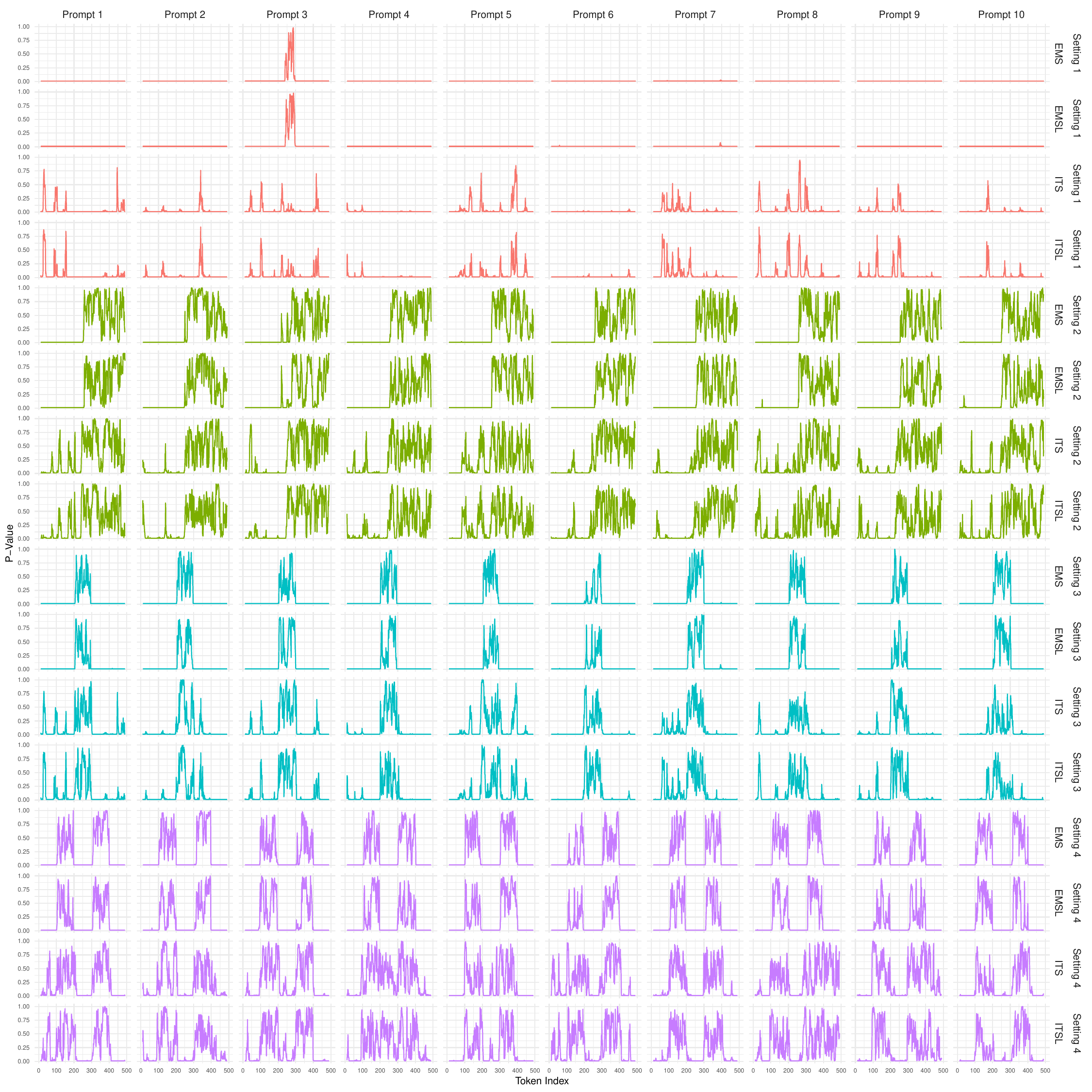}
\caption{Sequences of $p$-values for the first $10$ prompts extracted from the Google C4 dataset for LLM \texttt{openai-community/gpt2}, organized into groups of four consecutive rows, each group corresponding to a distinct setting. Within each group, the rows represent $p$-values calculated using four different distance metrics: \texttt{EMS}, \texttt{EMSL}, \texttt{ITS}, and \texttt{ITSL}.}
\label{fig:seq-10-p}
\includegraphics[width=0.7\textwidth]{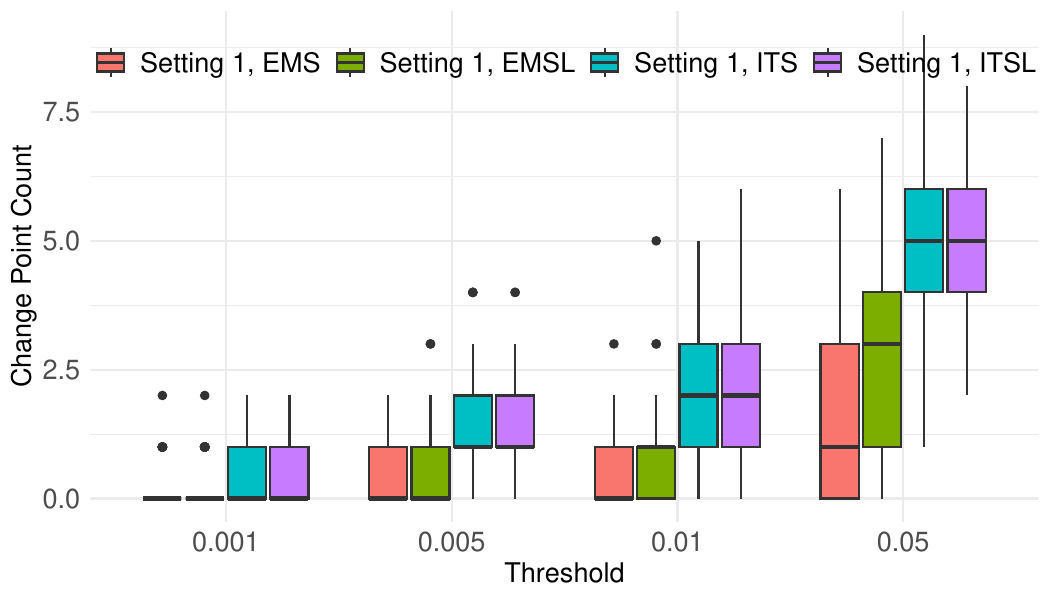}
\caption{The boxplots of the number of false positives with respect to different thresholds $\zeta$ under Setting~1. The texts are generated using \texttt{facebook/opt-1.3b}, and the four distance metrics are \texttt{EMS}, \texttt{EMSL}, \texttt{ITS}, and \texttt{ITSL}.}
\label{fig:false-positives-pvalues-opt}
\end{figure}

\begin{figure}[p]
\centering
\includegraphics[width=\textwidth]{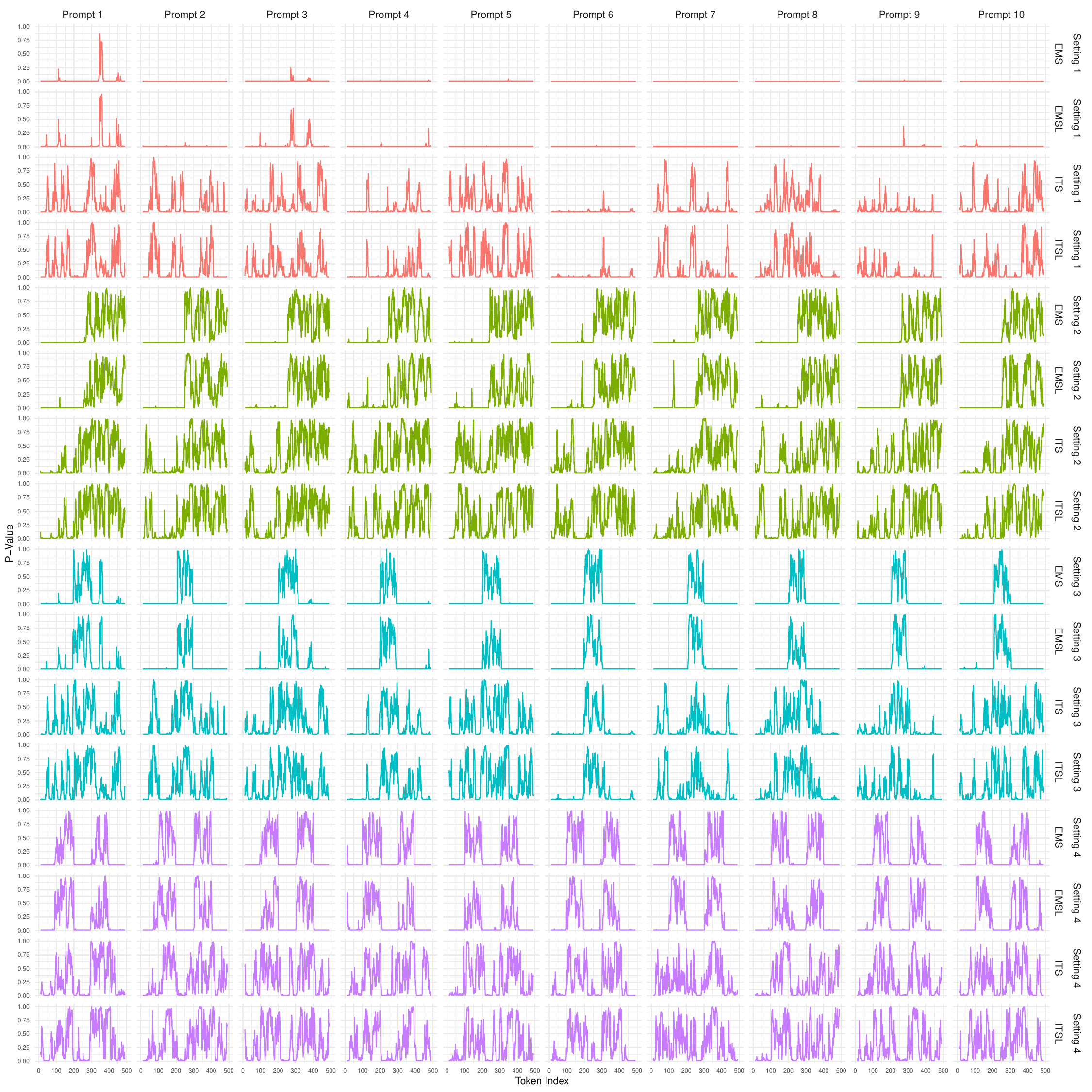}
\caption{Sequence of $p$-values for the first $10$ prompts extracted from the Google C4 dataset for LLM \texttt{facebook/opt-1.3b}, organized into groups of four consecutive rows, each group corresponding to a distinct setting. Within each group, the rows represent $p$-values calculated using four different distance metrics: \texttt{EMS}, \texttt{EMSL}, \texttt{ITS}, and \texttt{ITSL}.}
\label{fig:opt-pvalues}
\includegraphics[width=\textwidth]{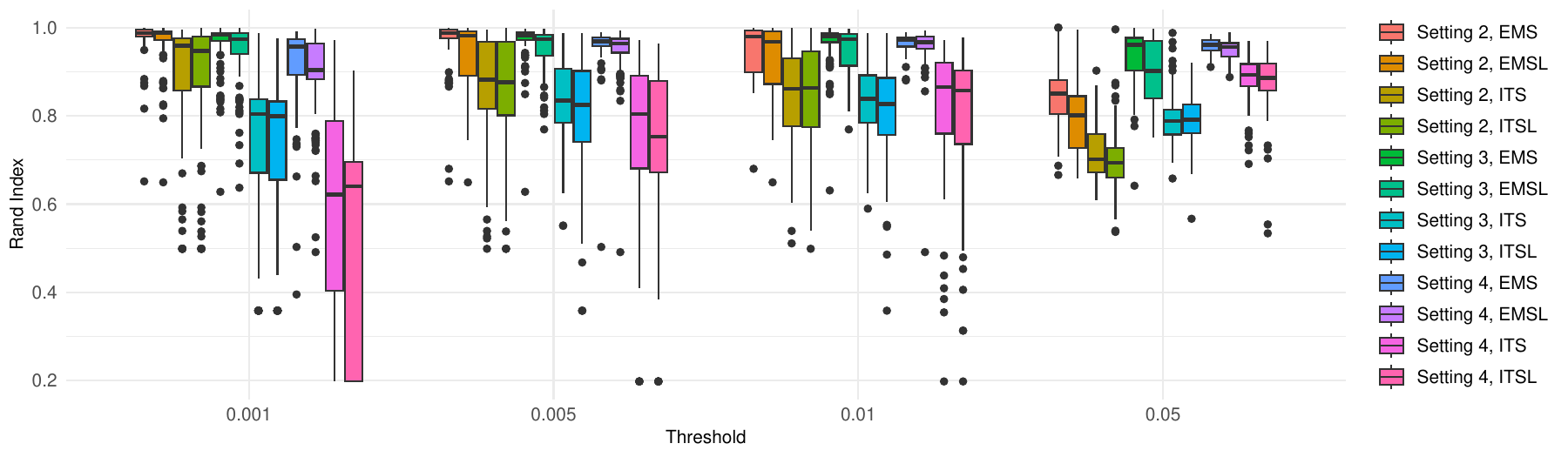}
\caption{The boxplots of the Rand index comparing the clusters identified through the detected change points for texts generated using \texttt{facebook/opt-1.3b} with the true clusters separated by the true change points with respect to different thresholds $\zeta$.}
\label{fig:rand-index-opt}
\end{figure}

The sequence of $p$-values from the first 10 prompts extracted from the Google C4 dataset in all settings is presented in Figure~\ref{fig:opt-pvalues}. The $p$-value sequences are organized into four groups, corresponding to the four settings in the numerical experiments section of the main text. Within each group, each row corresponds to one method.

By comparing with the $p$-value sequences obtained using \verb|openai-community/gpt2|, we claim that the segmentation algorithm's performance depends on the quality of the $p$-values obtained for each sub-string and this relies on not only the watermark generation schemes but also the language models from which the texts are generated.

\subsection{Results for meta-llama/Meta-Llama-3-8B}
\label{section:results-for-llm-meta-llama-ml3}

Figure~\ref{fig:false-positives-pvalues-ml3} shows the results for false discoveries under Setting 1, where the text tokens are all watermarked and generated with \verb|meta-llama/Meta-Llama-3-8B|. A lower threshold leads to fewer false discoveries.

\begin{figure}[H]
\centering
\includegraphics[width=\textwidth]{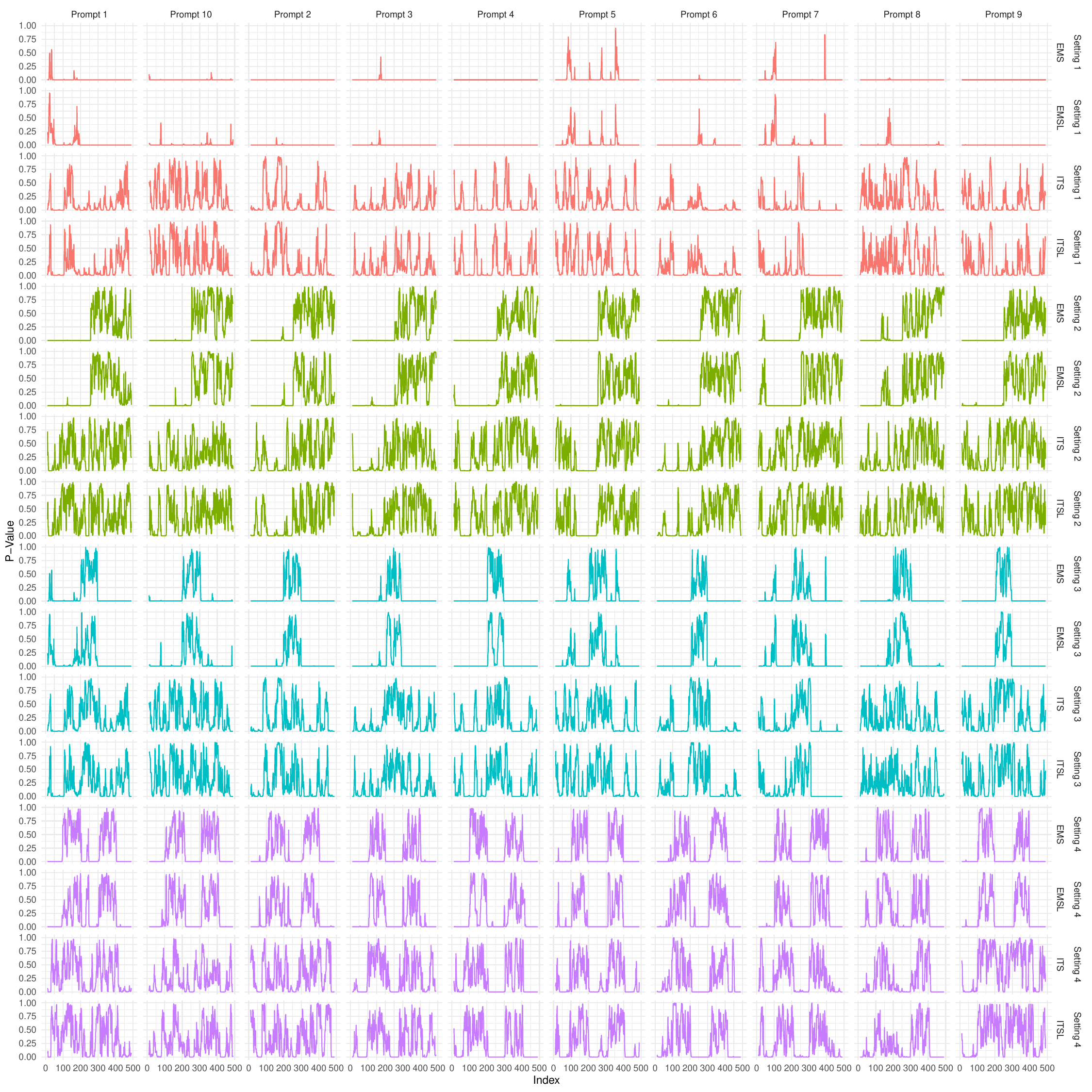}
\caption{Sequences of $p$-values for the first $10$ prompts extracted from the Google C4 dataset for LLM \texttt{meta-llama/Meta-Llama-3-8B}, organized into groups of four consecutive rows, each group corresponding to a distinct setting. Within each group, the rows represent $p$-values calculated using four different distance metrics: \texttt{EMS}, \texttt{EMSL}, \texttt{ITS}, and \texttt{ITSL}.}
\label{fig:seq-10-p-ml3}
\end{figure}

Figure~\ref{fig:rand-index-ml3} shows the boxplots of the Rand index for the four methods under different settings. \texttt{EMS} demonstrates the best performance across all settings, achieving its best performance at a threshold of $\zeta = 0.005$.

\begin{figure}[h]
\centering
\includegraphics[width=0.7\textwidth]{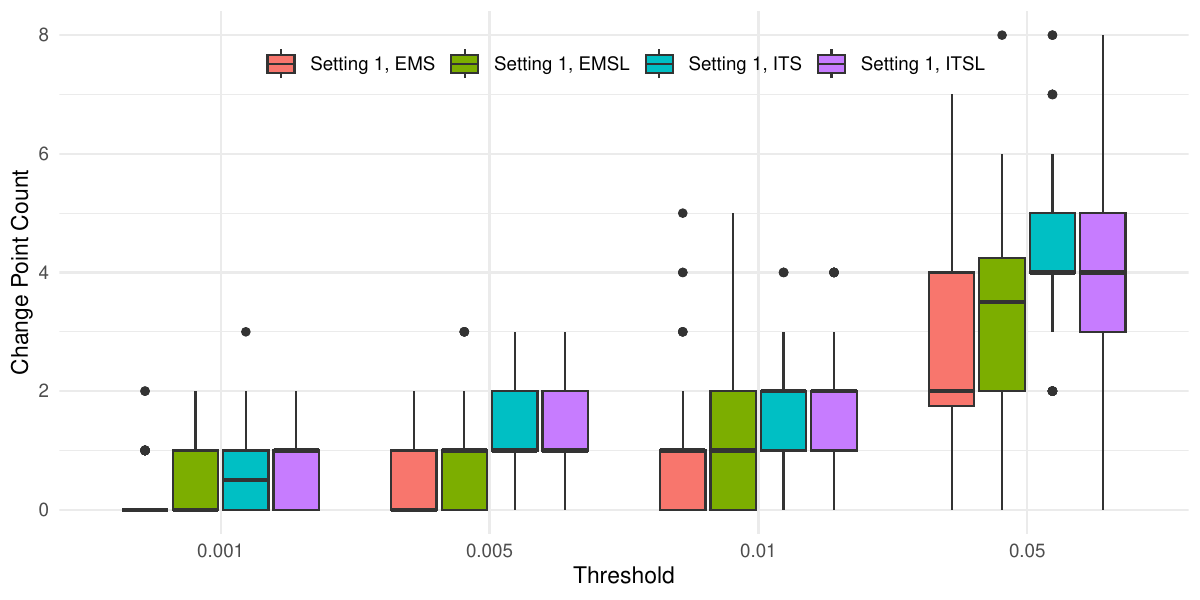}
\caption{The boxplots of the number of false positives with respect to different thresholds $\zeta$ under Setting~1. The texts are generated using \texttt{meta-llama/Meta-Llama-3-8B}, and the four distance metrics are \texttt{EMS}, \texttt{EMSL}, \texttt{ITS}, and \texttt{ITSL}.}
\label{fig:false-positives-pvalues-ml3}
\includegraphics[width=\textwidth]{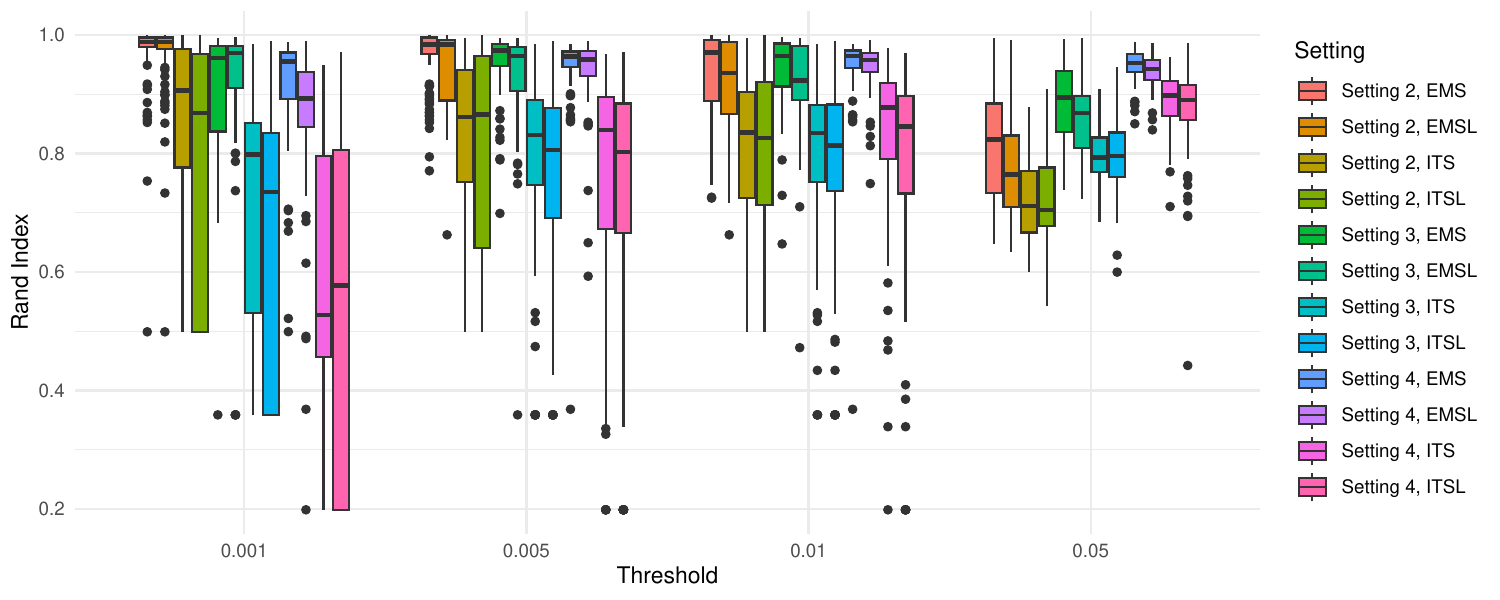}
\caption{The boxplots of the Rand index comparing the clusters identified through the detected change points for texts generated using \texttt{meta-llama/Meta-Llama-3-8B} with the true clusters separated by the true change points with respect to different thresholds $\zeta$.}
\label{fig:rand-index-ml3}
\end{figure}

\subsection{Other settings}\label{app:complex-setting}
We focus on the Meta-Llama-3-8B model to conduct simulation studies under some more difficult scenarios. Specifically, we increase the number of change points to $4$, $8$, and $12$ and vary the segment lengths accordingly. The results are presented in Figure~\ref{fig:llama-2}. For scenarios with $4$ and $8$ change points, the proposed method successfully identifies all change points. As the number of change points increases, the change point detection problem becomes much more challenging. In a scenario with 12 change points, our method was able to identify 9 of them, showing its robust performance in handling more difficult situations. Generally, the difficulty of a change point detection problem depends on the distance between the change points and the magnitudes of the changes, as indicated by the theoretical results in Proposition~\ref{prop-1}.

\begin{figure}
    \centering
    \includegraphics[width=0.9\linewidth]{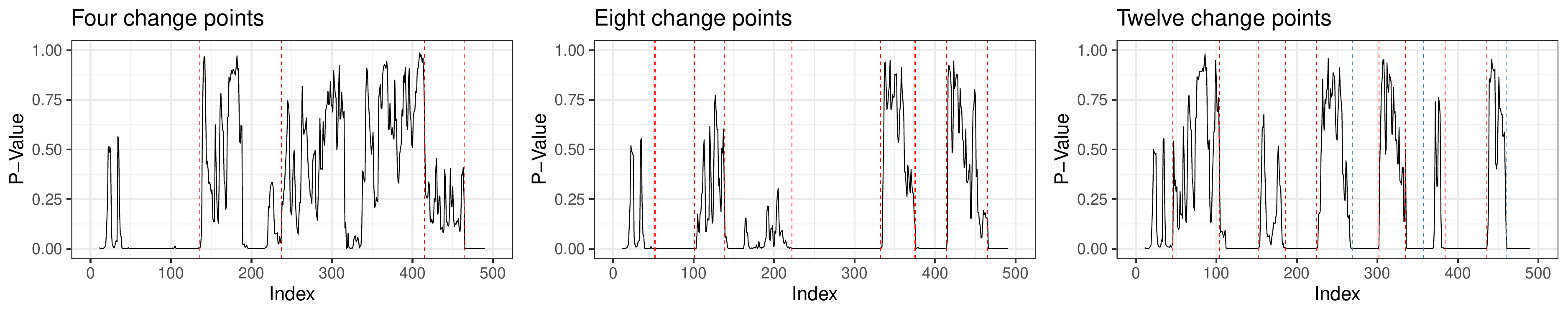}
    \caption{Sequences of p-values obtained using the EMS method across various settings. Change points identified by the proposed method are marked with red dashed lines.}
    \label{fig:llama-2}
\end{figure}

\section{Choices of the tuning parameters}\label{app:tun-para}
We investigate the impact of the window size $B$ and the number of permutations $T$ on the proposed method. In particular, we focus on the EMS method in simulation Setting 4 of the main text, using the Meta-Llama-3-8B model.

First, to study the impact of the window size $B$, we fix $T=999$ and vary the value of $B$ in the set $\{10, 20, 30, 40, 50\}$. Table~\ref{tab:diff_B} shows the rand index value for each setting, with a higher rand index indicating better performance.

\begin{table}[h]
    \caption{Results for different choices of $B$ when $T=999$.}\label{tab:diff_B}
\begin{center}
    \begin{tabular}{cccccc}
    \toprule
    $B$ & 10 & 20 & 30 & 40 & 50\\
    \midrule
    Rand index & 0.8808 & 0.9429 &  0.9641 & 0.9570 &0.9243\\
    \bottomrule
    \end{tabular}
\end{center}
\end{table}

It is crucial to select an appropriate value for $B$. If $B$ is too small, the corresponding window may not contain enough data to reliably detect watermarks, as longer strings generally make the watermark more detectable. Conversely, if $B$ is excessively large, it might prematurely shift the detected change point locations, thus reducing the rand index. For instance, let us consider a scenario with 200 tokens where the first 100 tokens are non-watermarked, and the subsequent 100 are watermarked, with the true change point at index 101. Assuming our detection test is highly effective, then it will yield a p-value uniformly distributed over $[0,1]$ over a non-watermarked window and a p-value around zero over a window containing watermarked tokens. When $B = 50$, the window beginning at the 76th token contains one watermarked token, which can lead to a small p-value and thus erroneously indicate a watermark from the 76th token onwards. In contrast, if $B = 20$, the window starting at the 91st token will contain the first watermarked token, leading to a smaller error in identifying the change point location. The above phenomenon is the so-called edge effect, which will diminish as $B$ gets smaller.

The trade-off in the choice of window size is recognized in the time series literature. For instance, a common recommendation for the window size in time series literature is to set $B=Cn^{1/3}$, where $n$ is the sample size, as seen in Corollary 1 of \cite{lahiri1999theoretical}. Based on our experience, setting $B = \lfloor 3n^{1/3} \rfloor$ (for example, when $n=500$, $B=23$) often results in good finite sample performance. A more thorough investigation of the choice of $B$ is deferred to future research.

We next examine how our method is affected by the choice of the number of permutations $T$. To this end, we set $B$ to 20 and consider $T\in \{99, 249, 499, 749, 999\}$. The results are summarized in Table~\ref{tab:diff_T}, including the rand index value and computation times for each setting. As expected, the computation time increases almost linearly with the number of permutations $T$. We also note that the rand index remains consistent across different values of $T$, indicating a level of stability in our method.

\begin{table}[h]
    \caption{Results (computational time in hours and rand index values) for different choices of $T$ when $B=20$.}\label{tab:diff_T}
\begin{center}
    \begin{tabular}{cccccc}
    \toprule
    $T$ & 99 & 249 & 499 & 749 & 999\\
    \midrule
    Time in hours &  3.41 & 7.33 & 11.47 &18.47 &24.94\\
    Rand index & 0.937 &  0.9404& 0.9326& 0.9348& 0.9354\\
    \bottomrule
    \end{tabular}
\end{center}
\end{table}
\end{document}